\newcommand{\expect}{\mathbb{E}}
\newtheorem{thm}{Theorem}
\newtheorem{obs}{Observation}
\newtheorem{prop}{Proposition}
\newtheorem{defn}{Definition}
\crefname{thm}{Theorem}{Theorems}
\crefname{lem}{Lemma}{Lemmas}
\crefname{obs}{Observation}{Observations}
\crefname{cor}{Corollary}{Corollaries}
\crefname{prop}{Proposition}{Propositions}
\crefname{asmp}{Assumption}{Assumptions}
\crefname{defn}{Definition}{Definitions}
\crefname{oracle}{Oracle}{Oracles}
\crefname{fact}{Fact}{Facts}
\crefname{conj}{Conjecture}{Conjectures}
\crefname{rem}{Remark}{Remarks}
\crefname{example}{Example}{Examples}
\crefname{condition}{Condition}{Conditions}
\crefname{exercise}{Exercise}{Exercises}
\crefname{algorithm}{Algorithm}{Algorithms}
\crefname{table}{Table}{Tables}
\crefname{figure}{Figure}{Figures}
\crefname{section}{Section}{Sections}
\crefname{subsection}{Section}{Sections}
\crefname{appendix}{Appendix}{Appendices}
\crefname{message}{Message}{Messages}
\definecolor{red}{rgb}{1, 0, 0}
\definecolor{green}{rgb}{0, 1, 0}
\definecolor{blue}{rgb}{0, 0, 1}
\definecolor{orange}{rgb}{1, 0.4, 0.0}
\tiny\color{gray},
\newcommand{\lstbg}[3][0pt]{{\fboxsep#1\colorbox{#2}{\strut #3}}}
\lstdefinelanguage{diff}{
  basicstyle=\ttfamily\small,
  morecomment=[f][\lstbg{red!20}]-,
  morecomment=[f][\lstbg{green!20}]+,
}
\lstdefinelanguage{diffpython}{
  language=diff,
  morekeywords={def, if, else, for, while, return, import, from, as, class, with, try, except, finally, raise, lambda, and, or, not, in, is, None, True, False},
  morecomment=[l]{\#},
  morestring=[b]",
  morestring=[b]',
}
\newcommand{\ttt}[1]{\texttt{#1}}
\title{Knapsack RL: Unlocking Exploration of LLMs via Optimizing Budget Allocation}
\author[1,2]{Ziniu Li}
\author[2]{Congliang Chen}
\author[3]{Tianyun Yang}
\author[3]{Ding Tian}
\author[2,3]{Ruoyu Sun}
\author[1]{Ge Zhang}
\author[1]{Wenhao Huang}
\author[2,3]{Zhi-Quan Luo}
\affiliation[1]{ByteDance Seed}
\affiliation[2]{The Chinese University of Hong Kong, Shenzhen}
\affiliation[3]{Shenzhen Research Institute of Big Data}
\abstract{
Large Language Models (LLMs) can self-improve through reinforcement learning, where they generate trajectories to explore and discover better solutions. However, this exploration process is computationally expensive, often forcing current methods to assign limited exploration budgets to each task. This uniform allocation creates problematic edge cases: easy tasks consistently succeed while difficult tasks consistently fail, both producing zero gradients during training updates for the widely used Group Relative Policy Optimization (GRPO). We address this problem from the lens of exploration budget allocation. Viewing each task's exploration as an "item" with a distinct "value" and "cost", we establish a connection to the classical knapsack problem. This formulation allows us to derive an optimal assignment rule that adaptively distributes resources based on the model's current learning status. When applied to GRPO, our method increases the effective ratio of non-zero policy gradients by 20-40\% during training. Acting as a computational "free lunch", our approach could reallocate exploration budgets from tasks where learning is saturated to those where it is most impactful. This enables significantly larger budgets (e.g., 93 rollouts) for especially challenging problems, which would be computationally prohibitive under a uniform allocation. These improvements translate to meaningful gains on mathematical reasoning benchmarks, with average improvements of 2-4 points and peak gains of 9 points on specific tasks. Notably, achieving comparable performance with traditional homogeneous allocation would require about 2x the computational resources.
}
\date{\today}
\begin{document}
\maketitle

\section{Introduction}

The remarkable capabilities of Large Language Models (LLMs) have led to their widespread application across various domains \citep{openai2025gpt5,comanici2025gemini,anthropic2025claude,meta2025llama,yang2025qwen3,seed2025seed1}. While pre-training on vast text corpora endows LLMs with general knowledge and linguistic fluency, fine-tuning them for specialized tasks often necessitates more targeted optimization beyond pre-training. Reinforcement Learning (RL) has emerged as a powerful paradigm for this purpose \citep{ouyang2022training, li2024remax,guo2025deepseek}, enabling LLMs to iteratively self-improve by interacting with environments. A popular instantiation is RL with verifiable rewards \citep{lambert2024tulu}, where LLMs generate responses and receive binary (true/false) feedback based on their outcomes, iteratively refining their internal policies to search for optimal solutions. Initially pioneered in mathematical reasoning \citep{jaech2024openai}, this framework has since been extended to domains like coding \citep{deepcoder2025} and agentic tasks \citep{team2025kimi}.

A core challenge in these applications is \emph{exploration}—sampling diverse trajectories to find better solutions. This process is computationally expensive in practice due to sequential nature of autoregressive generation. As such, most RL pipelines use a small number of rollouts per prompt (e.g., 8) for exploration. However, this uniform allocation strategy could lead to some problematic outcomes. For example, in the Group Relative Policy Optimization (GRPO) \citep{shao2024deepseekmath} algorithm, meaningful learning signals (gradients) only emerge when both successful and failed attempts are present in the same batch. With a uniform budget, easy tasks often result in all-success outcomes, and hard tasks in all-failure outcomes, leading to zero gradients and stalled learning. This issue has been well-documented in previous research \citep{yu2025dapo, chen2025spectral}, and we approach it from the broader perspective of strategic exploration budget allocation.

\begin{figure}[t]
    \centering
    \includegraphics[width=0.9\linewidth]{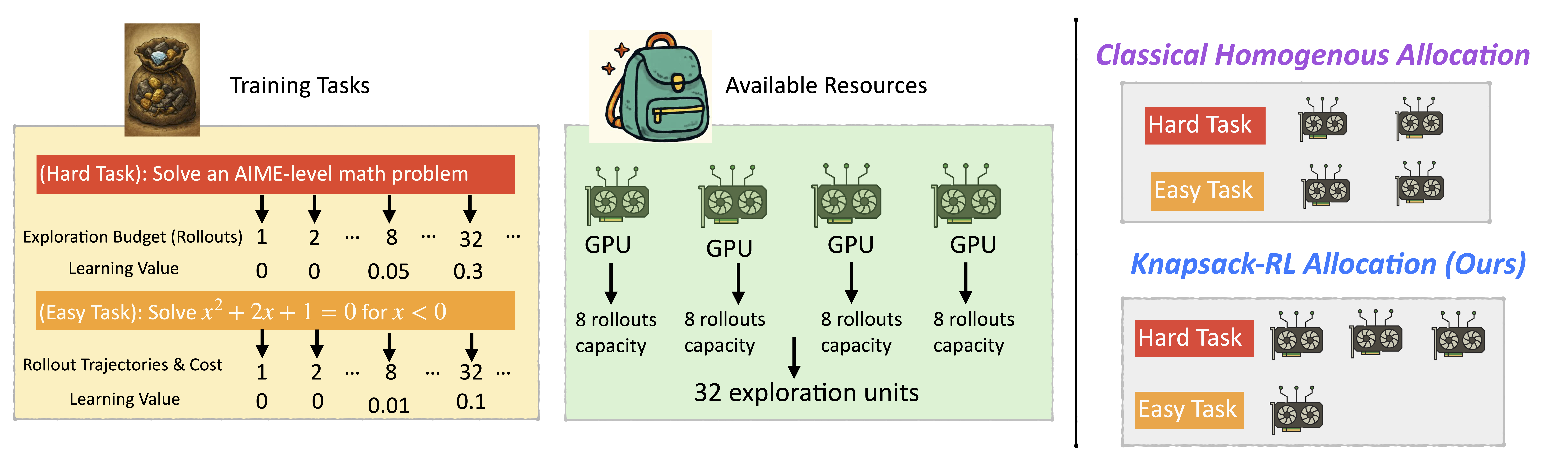}
    \caption{Illustration of our framework for allocating exploration budgets among tasks from computational resources. We model each task as an item with learning value and computational cost, then solve the allocation problem using Knapsack optimization.}
    \label{fig:main_framework}
\end{figure}

We argue the fundamental problem is the mismatch between a task's difficulty and its assigned exploration budget. Hard tasks, which require extensive  (could even require more than 100) exploration to find useful trajectories, receive too little effort under a uniform rule. Easy tasks, which require minimal exploration, waste compute by being over-sampled. Thus, a heterogeneous and customized exploration allocation strategy is preferred. 

To this end, we introduce a knapsack-based formulation: each task, when assigned a certain budget, can be conceptualized as an "item" with an associated value (learning potential) and cost (computational effort of exploration). The allocation problem is thus equivalent to the classical knapsack problem \citep{mathews1896partition,pisinger1998knapsack}, where the objective is to maximize total value under a fixed global budget. We refer to this approach as Knapsack RL; see \cref{fig:main_framework} for illustration. When applied to the popular GRPO framework, our method enables a dynamic, heterogeneous allocation of exploration budgets, which allows sufficient exploration on training tasks.

Empirically, across Qwen series models \citep{yang2024qwen2, yang2025qwen3} sized from 1B to 7B, we observe a 20-40\% improvement in effective gradient ratios, translating into more reliable policy improvements and average performance gains of about 2-4 points on several challenging benchmarks. To get a better sense of this improvement, we note that achieving comparable improvements with uniform allocation would require nearly 2x the computation. We present this as a proof-of-concept, demonstrating a promising direction to boost the effectiveness of RL.

\section{Preliminary}

Following \citep{ouyang2022training, shao2024deepseekmath}, we model language generation as autoregressive sampling from a conditional probability distribution $\pi_{\theta}(y|x)$, where $x$ represents the input prompt and $y$ represents the generated response. The parameter $\theta$ denotes the trainable parameters. Our goal is to improve the language model via RL by maximizing the expected performance of responses generated from the model distribution $\pi_{\theta}$:
\begin{align}\label{eq:reward_maximization}
\max_{\theta} \mathbb{E}_{y \sim \pi_{\theta}(\cdot|x)} [r(x, y)].
\end{align}
In this paper, we focus on RL with verifiable rewards \citep{lambert2024tulu}. Specifically, let $y = (\texttt{CoT}, \texttt{answer})$ denote the concatenation of Chain-of-Thought (CoT) \citep{wei2022chain} reasoning steps \texttt{CoT} and the final solution \texttt{answer}. The reward function $r(x, y)$ is defined as:
\begin{align} \label{eq:outcome_reward}
r(x, y) = \mathbb{I}(\texttt{answer} \text{ is correct with respect to } x),
\end{align}
where $\mathbb{I}(\cdot)$ is the indicator function and $r \in \{0, 1\}$ is binary (1 for correct, 0 for incorrect). This is equivalent to a $\{-1, 1\}$ scheme, where incorrect responses incur negative reward. This outcome-based reward formulation has been widely adopted (see e.g., \citep{guo2025deepseek} and references therein) and has been shown to effectively incentivize reasoning abilities \citep{wen2025reinforcement}.

\begin{algorithm}[htbp]
\caption{RL with Classical Homogeneous Budget Allocation} 
\label{algo:classical_rl} 
\begin{algorithmic}[1]
\For{iteration $t = 1, 2, \ldots$} 
\State Sample a mini-batch of prompts $(x_1, \ldots, x_M)$
\State Generate $N$ responses for each prompt $x_i$ \Comment{Budget Allocation}
\State Evaluate the rewards (e.g., \cref{eq:outcome_reward}) and compute the gradients (e.g., \cref{eq:grpo})
\State Update model parameters with estimated gradients
\EndFor 
\end{algorithmic} 
\end{algorithm}

To optimize \cref{eq:reward_maximization}, policy gradient methods \citep{sutton1999policy} are commonly employed. Among these, REINFORCE \citep{williams1992simple}-style stochastic policy gradient methods have become standard since the work of \citep{li2024remax}. These methods stochastically sample $N$ responses from $\pi_{\theta}$ and estimate gradients using direct reward feedback. Originally designed for single-task RL, this approach is typically extended to multi-task RL by employing homogeneous exploration budget allocation. \cref{algo:classical_rl} summarizes this classical framework.

In \cref{algo:classical_rl}, the sampling process in Line 3 corresponds to exploration in RL, where the model generates responses to search for optimal solutions. Line 5 corresponds to exploitation, updating the model to leverage feedback from data. We adopt the widely used gradient estimator from Group Relative Policy Optimization (GRPO) \citep{shao2024deepseekmath}:
\begin{align}
g(\theta) = \sum_{i=1}^{M} \sum_{j=1}^{N} \nabla_{\theta} \log \pi_{\theta}(y_{ij}|x_i) \cdot (r(x_i, y_{ij}) - b_{i}) \cdot c_{i}, \label{eq:grpo}
\end{align}
where $y_{ij}$ denotes the $j$-th sampled response for prompt $x_i$, and $\nabla_{\theta} \log \pi_{\theta}(y_{ij}|x_i)$ represents the gradient of the log-probability with respect to model parameters $\theta$. The baseline $b_i$ and normalization factor $c_i$ are defined as: $b_{i} = 1/N \cdot \sum_{j=1}^{N} r(x_i, y_{ij})$ and $c_i = 1/({\sigma_i + \epsilon})$ with $\sigma_i = \sqrt{1/N\cdot\sum_{j=1}^{N}(r(x_i, y_{ij}) - b_i)^2}$ is the standard deviation of rewards for prompt $x_i$, and $\epsilon$ is a small constant ($10^{-6}$) preventing division by zero when $\sigma_i = 0$. Technically, GRPO computes relative advantages within each response group (prompt), increasing likelihood of positive responses and decreasing likelihood of negative ones.

\section{Diagnosing Exploration in Homogeneous Budget Allocation}  \label{sec:diagnosing}

In this section, we discuss the limitations of homogeneous budget allocation for GRPO and present empirical observations that motivate our work.

\subsection{Motivation}

Exploration in RL is computationally expensive due to the sequential nature of autoregressive generation, often requiring substantial GPU memory and hours of computation, especially for reasoning tasks. From the sample efficiency perspective, it is critical to assess how much each collected sample actually contributes to gradient updates. For GRPO, we make the following observation.
\begin{obs}
Let $g_i = \sum_{j=1}^{N} \nabla_{\theta} \log \pi_{\theta}(y_{ij}|x_i) \cdot (r(x_i, y_{ij}) - b_{i}) \cdot c_{i}$ be the gradient for prompt $i$. If $\sigma_i = 0$, meaning that all $N$ sampled responses for $x_i$ yield identical rewards (all correct or all incorrect), then $(r(x_i, y_{ij}) - b_i) = 0$ for every sample in this group, leading to $g_i = 0$. In this case, the model receives no learning signal from that prompt.
\end{obs}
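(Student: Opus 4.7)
The plan is to proceed directly from the definitions, since the statement is essentially a computational consequence of the GRPO baseline choice. First, I would unpack the assumption $\sigma_i = 0$. Recalling that $\sigma_i = \sqrt{(1/N)\sum_{j=1}^{N}(r(x_i,y_{ij}) - b_i)^2}$, the condition $\sigma_i = 0$ forces a sum of non-negative squares to vanish, so each individual squared deviation must vanish: $r(x_i, y_{ij}) = b_i$ for every $j \in \{1,\ldots,N\}$. Combined with the definition $b_i = (1/N)\sum_{j=1}^{N} r(x_i, y_{ij})$, this means all $N$ rewards share a common value equal to $b_i$. For the binary reward in \cref{eq:outcome_reward}, that common value must be either $0$ (all failures) or $1$ (all successes), matching the verbal claim in the observation.

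Second, I would substitute $r(x_i, y_{ij}) - b_i = 0$ back into the per-prompt gradient expression. Each of the $N$ summands in $g_i$ contains the factor $(r(x_i, y_{ij}) - b_i)$, which is now zero, while the remaining factors $\nabla_\theta \log \pi_\theta(y_{ij}|x_i)$ and $c_i$ are finite. Here the role of $\epsilon$ in $c_i = 1/(\sigma_i + \epsilon)$ becomes important: without it, $c_i$ would be $1/0$ and one would face an indeterminate $0 \cdot \infty$ form. With the additive $\epsilon$, however, $c_i = 1/\epsilon$ is a well-defined finite constant, so each summand is unambiguously $0$, and therefore $g_i = 0$.

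Finally, I would interpret the conclusion in terms of the aggregate gradient \cref{eq:grpo}: since $g(\theta) = \sum_{i=1}^{M} g_i$, a prompt with $\sigma_i = 0$ contributes nothing to the policy update, so the $N$ rollouts invested in $x_i$ yield no learning signal. There is no real obstacle in the argument; the only subtlety worth flagging is precisely the handling of $c_i$, which is why I would mention the $\epsilon$-regularization explicitly rather than gloss over it. This clean characterization is exactly the inefficiency that motivates the subsequent knapsack formulation: uniform budget allocation wastes all $N$ samples whenever a prompt lands in the degenerate all-correct or all-incorrect regime.
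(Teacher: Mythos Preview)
Your proposal is correct and follows exactly the same line of reasoning as the paper: the observation is stated as self-evident in the paper, with the justification embedded in the statement itself (identical rewards $\Rightarrow$ $r(x_i,y_{ij})-b_i=0$ for all $j$ $\Rightarrow$ $g_i=0$). Your explicit remark that the additive $\epsilon$ in $c_i$ is what prevents a $0\cdot\infty$ indeterminacy is a useful clarification the paper leaves implicit, but otherwise the arguments coincide.
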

This phenomenon is widely recognized as a major bottleneck for GRPO in practice \citep{yu2025dapo,chen2025spectral}. To formally track it, we introduce the metric \texttt{effective-gradient-ratio}, which measures the proportion of individual samples that contribute non-zero gradients:
\begin{align}  \label{eq:effective_gradient_ratio}
\texttt{effective-gradient-ratio} = \frac{1}{M \cdot N} \sum_{i=1}^{M} \sum_{j=1}^{N} \mathbb{I}(g_{i, j} \neq 0),
\end{align}
where $g_{i,j} = \nabla_{\theta} \log \pi_{\theta}(y_{ij}|x_i) \cdot (r (x_i, y_{ij}) - b_{i}) \cdot c_{i}$ is the gradient contribution from the $j$-th sample of the $i$-th prompt. Higher values indicate that more samples provide useful gradient signals. We also define two complementary metrics: \ttt{zero-gradient-ratio (by all positive rewards)}: proportion of prompts yielding zero gradients due to uniformly positive rewards; and \ttt{zero-gradient-ratio (by all negative rewards)}: proportion of prompts yielding zero gradients due to uniformly negative (or zero) rewards.

We visualize these dynamics in Figure~\ref{fig:qwen2.5_7b_gradient_ratio} for the \texttt{Qwen2.5-Math-7B} model trained on the \texttt{DAPO-MATH-17K} dataset. Each mini-batch contains $M = 256$ prompts with $N=8$ rollouts per prompt. The results reveal several concerning patterns:

\textbf{Low Overall Effectiveness:} The effective gradient ratio consistently remains below 60\%, meaning that over 40\% of sampled data fails to contribute to model updates—a significant waste of computational resources.

\textbf{Dynamic Training Phases:} The gradient dynamics exhibit three distinct phases:
\begin{itemize}
    \item  \textbf{Early Training (0-70 iterations, approximately the first epoch):} The model struggles with most tasks, leading to predominantly all-negative rewards (green line peaks near 95\%). This results in minimal learning signals being generated.
    \item \textbf{Mid Training (70-600 iterations):} As the model improves, it begins solving some tasks while still failing others, creating the mixed outcomes necessary for effective gradients. The effective gradient ratio can maintain above 40\% during this phase.
    \item \textbf{Late Training (600+ iterations):} Tasks become increasingly easy, leading to a rise in all-positive rewards (orange line increases to 40\%). Simultaneously, challenging tasks still result in all-negative rewards (the green line fluctuates around 20\%). As a result, the effective-gradient-ratio steadily decreases to about 20\% by 1000 iterations.
\end{itemize}

\begin{wrapfigure}[12]{r}{0.45\linewidth}
\begin{center}
    \centering
    \includegraphics[width=\linewidth]{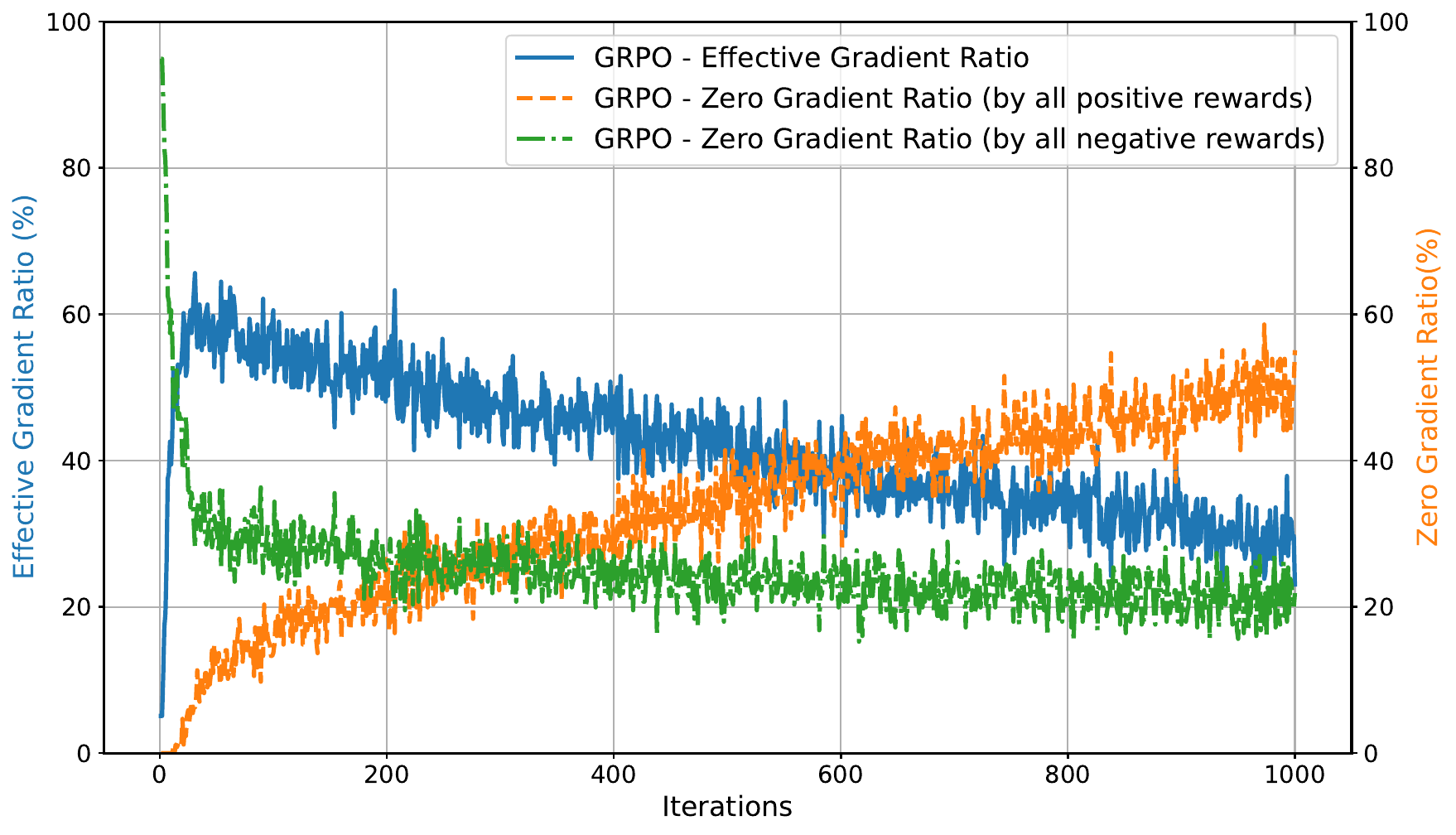}
    \caption{The ratio of effective gradients and zero gradients during training.}
    \label{fig:qwen2.5_7b_gradient_ratio}
\end{center}
\end{wrapfigure}

We provide theoretical analysis toward understanding the above empirical observations in the next section.

\subsection{Theoretical Analysis}
\label{sec:analysis}

We model reward outcomes as Bernoulli random variables to analyze the  exploration budget required.

\begin{defn}[Success Rate]
We define the success rate $p_i$ on a prompt $x_i$ as the probability that the model generates a correct response: $p_i \equiv  p(x_i) = \mathbb{E}_{y \sim \pi_{\theta}(\cdot|x_i)}[r (y|x_i)] = \Pr[r(y|x_i) = 1]$.
\end{defn}
This formulation allows statistical analysis of stochastic gradients. For $N$ sampled responses $y_{i1}, \ldots, y_{iN}$ on a prompt $x_i$, the probability that both correct and incorrect samples are observed is:
\begin{align*}
    \sP(g_{i} \neq 0) &= 1 - \sP[\text{all rewards are the same}] = 1 - \sP[\text{all rewards are 1's}] - \sP[\text{all rewards are 0's}] \\
    &= 1 - p_i^{N} - (1-p_i)^{N}.
\end{align*}
This raises the question: how large must the sampling budget $N$ be to obtain a non-zero gradient? We answer this from two perspectives: high-probability guarantees and expected sample complexity.

\newpage 
\begin{thm}[Exploration Budget]  \label{lem:budget_required}
Given a prompt $i$ with the success rate $p_i \in (0, 1)$, we have that 
\begin{itemize}
    \item \textbf{High probability bound:} For any $\alpha \in (0, 1)$, to ensure $\sP(g_i \ne 0) \geq \alpha$, it suffices to take $N \gtrsim \frac{\ln(1 - \alpha)}{\ln(\max\{p_i, 1-p_i\})}$.
    \item \textbf{Expected number of rollouts:} Let $N^{\operatorname{first}}_i$ denote the number of independent rollouts required until $g_i \neq 0$ is achieved for the first time. Its expectation is: $\expect[N^{\operatorname{first}}_i] = 1/p_i + 1 / (1-p_i) - 1$.
\end{itemize}
\end{thm}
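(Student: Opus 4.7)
The plan is to derive both claims directly from the explicit formula $\sP(g_i \neq 0) = 1 - p_i^N - (1-p_i)^N$ recorded immediately above the theorem, combined with elementary properties of Bernoulli sampling.

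For the high-probability bound, I would symmetrize by setting $q = \max\{p_i, 1-p_i\}$, so that both $p_i^N$ and $(1-p_i)^N$ are at most $q^N$, giving $\sP(g_i = 0) \leq 2 q^N$. Requiring $2 q^N \leq 1 - \alpha$ and solving for $N$ yields the sufficient condition
\begin{align*}
N \;\geq\; \frac{\ln 2 - \ln(1-\alpha)}{-\ln q} \;=\; \frac{\ln((1-\alpha)/2)}{\ln q}.
\end{align*}
Since both $\ln q < 0$ and $\ln(1-\alpha) < 0$, the ratio is positive; and as $\alpha \to 1$ the additive $\ln 2$ is dominated by $-\ln(1-\alpha)$, which recovers the stated asymptotic $N \gtrsim \ln(1-\alpha)/\ln q$, with the $\gtrsim$ notation absorbing the constant.

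For the expected number of rollouts, my plan is to use the decomposition $N_i^{\operatorname{first}} = \max(T_1, T_0)$, where $T_1$ and $T_0$ denote the positions of the first success and first failure in the i.i.d.\ Bernoulli($p_i$) rollout stream; marginally, $T_1$ and $T_0$ are geometric with means $1/p_i$ and $1/(1-p_i)$ respectively. The key observation that avoids any conditioning is that $\min(T_1, T_0) = 1$ almost surely, because the very first rollout is necessarily either a success or a failure. Applying the algebraic identity $\max(a,b) + \min(a,b) = a + b$ and taking expectations then gives
\begin{align*}
\expect[N_i^{\operatorname{first}}] \;=\; \expect[T_1] + \expect[T_0] - \expect[\min(T_1, T_0)] \;=\; \frac{1}{p_i} + \frac{1}{1-p_i} - 1.
\end{align*}

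Neither step has a real technical obstacle: part (i) is a one-line $\max$-symmetrization followed by taking a logarithm, and part (ii) reduces to recognizing the almost-sure identity $\min(T_1, T_0) = 1$, which sidesteps the conditional case analysis on the outcome of the first rollout. The only subtlety worth flagging is that the $\gtrsim$ in part (i) suppresses the $\ln 2$ correction, so in the formal write-up I would state the sharper explicit threshold above and then present the $\gtrsim$ form as an immediate consequence.
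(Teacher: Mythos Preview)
Your proposal is correct. Part~(i) is essentially identical to the paper's argument: both symmetrize via $q = \max\{p_i, 1-p_i\}$, bound $p_i^N + (1-p_i)^N \leq 2q^N$, and absorb the resulting $\ln 2$ into the $\gtrsim$ notation. If anything, you are slightly more explicit---the paper takes a detour through a limit argument (that $(1-q)^N/q^N \to 0$) before dropping the factor of~$2$, whereas you state the sharp threshold $N \geq \ln((1-\alpha)/2)/\ln q$ up front and then pass to the asymptotic form.

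Part~(ii) is where you genuinely diverge. The paper conditions on the outcome of the first rollout $X_1$ and applies the law of total expectation: if $X_1 = 1$ the remaining wait for a failure is geometric with mean $1/(1-p_i)$, and symmetrically if $X_1 = 0$; combining gives $p_i\bigl(1 + \tfrac{1}{1-p_i}\bigr) + (1-p_i)\bigl(1 + \tfrac{1}{p_i}\bigr)$, which then needs a few lines of algebraic simplification. Your route instead writes $N_i^{\mathrm{first}} = \max(T_1, T_0)$, invokes the pointwise identity $\max + \min = T_1 + T_0$, and observes $\min(T_1, T_0) = 1$ almost surely---so linearity of expectation (no independence needed, since the identity holds $\omega$-by-$\omega$) delivers the answer in one line with no case split and no simplification. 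Both arguments are valid; yours is slicker and makes the provenance of the ``$-1$'' term transparent, while the paper's conditioning approach is the more routine textbook maneuver and would extend more mechanically if one wanted, say, higher moments of $N_i^{\mathrm{first}}$.
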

Please refer to Appendix \ref{appendix:proof} for the proof. To illustrate, for example, if $p=0.5$, we need $3$ rollouts on average to obtain a non-zero gradient. For a hard task with $p=0.01$, we require $100$ rollouts on average, and to achieve a $90\%$ chance of non-zero gradient, we would need $229$ rollouts.

We show the theoretical predictions in \cref{fig:success_rate_vs_budget}. We employ the   \texttt{Qwen2.5-Math-7B-Instruct} model to generate $256$ responses for $1{,}000$ prompts from the \texttt{DAPO-Math-17K} dataset. Then we estimate $p$ and compute the minimal budget $N$ needed for $g_i \neq 0$ from the data. We exclude prompts that with empirical success rate of $0.0$ or $1.0$, because our exploration budget $256$ is not sufficient. The results show that a typical budget of $N=8$ only covers tasks with $p \in [0.1, 0.9]$. For tasks with $p \approx 0$ or $p \approx 1$, even increasing $N$ to $16$ or $32$ is insufficient. Overall, our analysis shows that the sampling budget required for meaningful gradients could be much larger than what is practically used. This also helps explain the low effective gradient ratio observed in \cref{fig:qwen2.5_7b_gradient_ratio}.

\begin{figure}[t]
    \centering
    \includegraphics[width=0.83\linewidth]{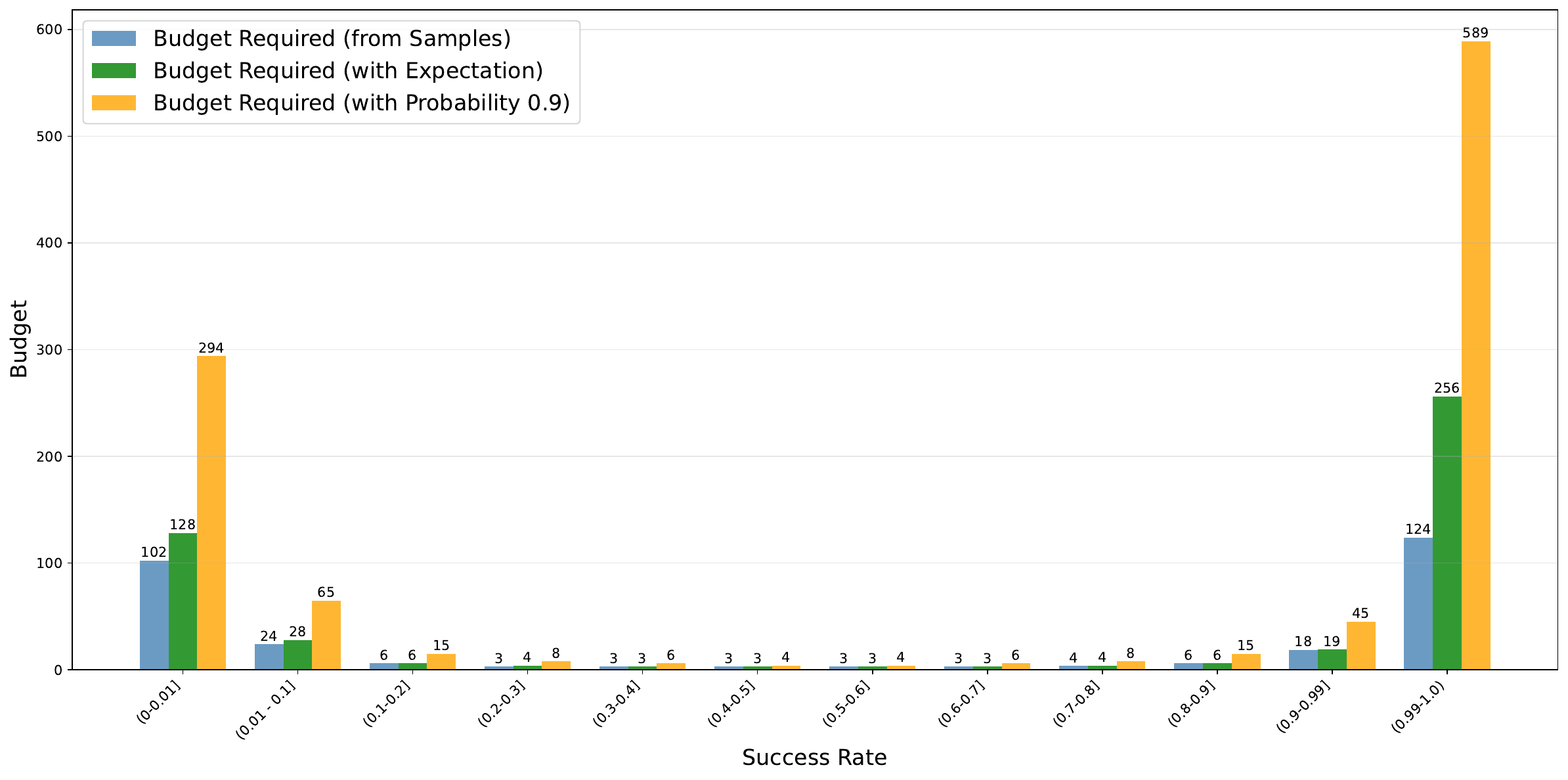}
    \caption{Exploration budget required to ensure non-zero gradients based on success rate. Note that success rates with in the same bins are grouped from real samples, which may not be symmetry, rendering the exploration budget may not be symmetry as the theory suggests.}
    \label{fig:success_rate_vs_budget}
\end{figure}

Existing practices typically address this kind of insufficient exploration challenge in two ways:
\begin{itemize}
    \item \textbf{Increasing the exploration budget uniformly.} This involves raising $N$—for example, from $8$ to $16$ or even $32$—which could help address exploration on extremely hard or easy tasks and improve the effective gradient ratio. However, setting a very large value for $N$, such as $N=100$, is often impractical due to prohibitive computational costs.
    \item \textbf{Filtering hard and easy prompts.} Tasks that are too easy or too hard are dropped. This kind of approach is leveraged in \citep{team2025kimi,yu2025dapo}. However, as we have seen, the proportion of prompts yielding zero gradients due to all-negative rewards (the green line in Figure \ref{fig:qwen2.5_7b_gradient_ratio}) is around 20\% in late training, indicating many tasks are not yet fully solved. If we simply filter these prompts, we may close off a crucial source for RL, where meaningful learning often comes from converting failures into successes. That is, removing hard prompts deprives the model of opportunities to practice on challenging examples, limiting the information available to LLMs.
\end{itemize}
In this work, we favor addressing this issue by scaling exploration budgets, but recognize that the first approach presents a fundamental \emph{computation-exploration dilemma}. This tension motivates our pursuit of a more principled solution for allocation of exploration budgets.

\section{Proposed Approach: Knapsack-based Budget Allocation}

In this section, we introduce our approach to addressing the {exploration–computation dilemma}. Our key principle is that computational resources are typically fixed by user constraints; thus, we do not assume access to additional resources. Instead, we treat the available compute as a fixed pool and design a centralized allocation strategy that distributes exploration budgets across tasks in a more informed way.

The central technical question is: \textbf{given a fixed total budget, what is the optimal allocation for RL exploration?} The homogeneous allocation strategy fails to consider the following aspects to be optimal:
\begin{itemize}
    \item \emph{Exploration cost}: some tasks require more rollouts to make progress.  
    \item \emph{Learning value}:  the potential benefit of improving performance on a given task.
\end{itemize}
We formalize the above idea as a constrained optimization problem:
\begin{align} 
\max_{N_1, \ldots, N_M} &\sum_{i=1}^{M} \operatorname{Value}(N_i, p_i)  \label{eq:knapsack_rl} \\
\text{subject to} & \sum_{i=1}^{M} N_i = N_{\operatorname{total}}, \quad  N_{\operatorname{low}} \leq N_i \leq N_{\operatorname{up}}, \quad N_i \in \mathbb{Z}^+,  \nonumber
\end{align}
where $N_i$ is the number of trajectories allocated to prompt $x_i$, and $p_i$ is the success rate. The bounds $N_{\operatorname{low}}$ (e.g., 2) and $N_{\operatorname{up}}$ (e.g., 128) allow to enforce coverage and prevent degenerate allocations. The total budget $N_{\operatorname{total}}$ is usually set to $N \times M$ to match the homogeneous allocation rule.

This formulation is structurally identical to the {classical knapsack problem} \citep{pisinger1998knapsack}. In the analogy, a task equipped with a specific exploration budget corresponds to an item. The number of allocated rollout trajectories $N_i$ is the item’s weight, and the resulting $\operatorname{Value}(N_i, p_i)$ is the item’s value. This analogy is natural: increasing $N_i$ requires more computation (longer generation time or more GPU memory), but it may also yield a higher learning benefit by producing more informative gradients. The knapsack’s capacity corresponds to the total exploration budget, which is physically determined by the available computational resources. The optimization objective is thus to maximize the total learning value subject to the fixed exploration budget. We summarize the correspondence in \cref{tab:connection_with_knapsack}.

\begin{table}[htbp]
    \centering
    \caption{Connection between RL exploration and the knapsack problem.}
    \label{tab:connection_with_knapsack}
    \begin{tabular}{c|c}  
    \toprule
     \textbf{RL Exploration Setting} & \textbf{Knapsack Analogy} \\  
     \midrule
     A task with an exploration budget & An item placed in the knapsack \\
     Number of trajectories assigned to the task & Weight of the item \\
     Potential learning benefit of allocating budget to the task & Value of the item \\
     Total exploration budget across all tasks & Knapsack capacity \\
     Available computational resources (e.g., GPUs) & The knapsack (the container itself) \\
     \bottomrule
    \end{tabular}
\end{table}

A subtle but important point is that task difficulty alone does not determine value. In other words, a task $x_i$ (an item) does not inherently possess value based solely on its difficulty $p_i$. Instead, value emerges only when difficulty is paired with a particular budget $N_i$, which governs its effective contribution. Consequently, the optimization problem is not defined over $M$ tasks directly but over all valid \emph{task–allocation pairs}. This means the effective item set is larger, with size $M \times (N_{\operatorname{up}} - N_{\operatorname{low}})$, since each task can correspond to multiple potential items depending on the allocated budget.

\subsection{Formulation of Task Value} \label{sec:task_value}

In this section, we substantiate the above framework with the proposed idea. We recognize that homogeneous budget allocation fails to take the task value into consideration. For GRPO, we address this issue by defining the value of assigning $N_i$ exploration budget units to prompt $x_i$ as
\begin{align*}
\operatorname{Value}(N_i, p_i) = \operatorname{ProbNonZeroGradient}(N_i, p_i) \times \operatorname{InfoGain}(p_i),
\end{align*}
where $\operatorname{ProbNonZeroGradient}(N_i, p_i) = 1 - p_i^{N_i} - (1-p_i)^{N_i}$ is the probability of obtaining a non-zero gradient (see \cref{sec:analysis}) for GRPO, and $\operatorname{InfoGain}(p_i)$ quantifies the informativeness of a gradient if one occurs. It can also be extended to other algorithms; see Appendix \ref{appendix:extension}. Our design emphasizes coverage of effective gradients across prompts: it accounts for whether a non-zero gradient is likely to appear, but not for the exact balance of positive versus negative samples.

In this work, we define $\operatorname{InfoGain}$ as a measure of the expected increase in success probability after a gradient update, while noting that alternative formulations could be explored in future work. Formally, let $p^t_i$ denote the success rate before the update and $p^{t+1}_i$ the rate after the update. We define
\begin{align*}
\operatorname{InfoGain} &= \Delta p_i = p^{{t+1}}_i - p^t_i.
\end{align*}
Directly computing this requires access to the post-update success probability, which is intractable. We address this issue by some approximation bounds.

\begin{prop}  \label{prop:info_value}
With the Taylor expansion, the $\operatorname{InfoGain}$ can be approximated by $\boxed{p_i(1-p_i)^2}$.
\end{prop}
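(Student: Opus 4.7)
The plan is to Taylor-expand the post-update success probability $p_i^{t+1}$ around the current parameters and substitute the expected GRPO step. Writing $\theta^{t+1}=\theta^t+\eta\, g_i(\theta^t)$, where $g_i$ collects the GRPO contributions from prompt $x_i$, and applying a first-order Taylor expansion in $\theta$ gives $\Delta p_i = \eta \nabla_\theta p_i(\theta^t)^\top g_i + O(\eta^2)$. Taking expectations over the rollout randomness converts this stochastic increment into a deterministic quantity that I can try to simplify into a closed-form function of $p_i$ alone.

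Next I would compute $\mathbb{E}[g_i]$ explicitly. Using the REINFORCE identity $\nabla_\theta p_i = \mathbb{E}_y[\nabla_\theta \log \pi_\theta(y|x_i)\, r(y|x_i)]$ together with the baseline-subtracted GRPO form, only positive-reward trajectories carry the coherent learning signal in expectation: such a trajectory occurs with probability $p_i$, and under the binary reward with baseline $b=p_i$ its advantage equals $1-p_i$. To close the remaining factor, I would pass to a canonical single-logit reduction $p_i=\sigma(z_i)$, so that $dp_i/dz_i = p_i(1-p_i)$ and the log-probability score for a correct trajectory equals $1-p_i$. Multiplying the three effective factors (occurrence probability $p_i$, advantage $1-p_i$, score $1-p_i$) and absorbing $\eta$ together with the parameterization Jacobian into the overall proportionality constant produces the boxed approximation $p_i(1-p_i)^2$.

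The main obstacle I anticipate is the parameterization-dependence of $\|\nabla_\theta p_i\|$: in the actual LLM setting $p_i$ is a composition of softmax layers over a deep network, so moving from the exact first-order expansion to the explicit form $p_i(1-p_i)^2$ is really an \emph{approximation} rather than an identity. A cleaner rigorous version would either (i) state the result up to a parameterization-dependent multiplicative constant, or (ii) match the single-logit model to the local Fisher metric of $\pi_\theta$ at $x_i$ and bound the Taylor remainder using second-order smoothness of $p_i$ in $\theta$. The paper's own phrasing (``\emph{approximation bounds}'') is consistent with this reading, so I would present the derivation explicitly as a leading-order heuristic and defer the higher-order remainder estimate to a short remark.
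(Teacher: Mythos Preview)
Your derivation reaches the right formula, but by two steps that are each individually wrong and only ``cancel'' by accident; done correctly, your route gives a different function of $p_i$ than the paper's.

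\textbf{What the paper actually does.} The paper works in a $K$-way softmax toy model, \emph{conditions on the correct action $y$ being taken}, and sets the advantage to $A=1$ (not the GRPO advantage $1-p_i$). The update is $\Delta z_k=\nabla_{z_k}\log p_y=\mathbb{I}[k=y]-p_k$, and the first-order Taylor expansion gives
\[
\Delta p_y=\underbrace{p_y(1-p_y)}_{\partial p_y/\partial z_y}\cdot\underbrace{(1-p_y)}_{\Delta z_y}\;+\;p_y\sum_{k\neq y}p_k^2,
\]
after which the cross term is dropped. So the three factors producing $p_y(1-p_y)^2$ are the two pieces of the softmax Jacobian and the score $\nabla_{z_y}\log p_y=1-p_y$; there is no ``occurrence probability'' factor and the advantage is unit.

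\textbf{Where your argument breaks.} First, the claim that ``only positive-reward trajectories carry the coherent learning signal in expectation'' is false: in your own sigmoid model a negative trajectory contributes $(-p_i)\cdot(-p_i)=p_i^2$ to $g_i$, so the expected per-sample update is $p_i(1-p_i)^2+(1-p_i)p_i^2=p_i(1-p_i)$, not $p_i(1-p_i)^2$. Second, you cannot ``absorb the parameterization Jacobian into the proportionality constant'': $dp_i/dz_i=p_i(1-p_i)$ depends on $p_i$, so dropping it changes the functional shape. Carrying your computation through honestly gives
\[
\Delta p_i \;\propto\; \underbrace{p_i(1-p_i)}_{\text{Jacobian}}\cdot\underbrace{p_i(1-p_i)}_{\mathbb{E}[g_i]}\;=\;p_i^2(1-p_i)^2,
\]
which is symmetric about $p_i=1/2$, not peaked at $p_i=1/3$ as the paper's approximation is. (Keeping the GRPO normalization $c_i\approx[p_i(1-p_i)]^{-1/2}$ gives $[p_i(1-p_i)]^{3/2}$, still symmetric.) The asymmetry that the paper exploits downstream comes precisely from \emph{not} averaging over rollouts: conditioning on a positive sample and using unit advantage is what breaks the $p\leftrightarrow 1-p$ symmetry.
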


Please refer to Appendix \ref{appendix:proof} for detailed derivation.  This approximation relies only on the current success rate $p_i$. Although idealized and not exact, it captures essential intuitions while remaining simple to compute. Its key properties are:
\begin{itemize}
\item $\operatorname{InfoGain}(p_i)$ is maximized at $p_i = 1/3$. This aligns with the intuition that uncertain-but-promising samples are most valuable.
\item $\operatorname{InfoGain}(p_i)$ is asymmetric: for equally distant values of $p_i$ from $1/3$, harder tasks yield larger information gain than easier tasks. Furthermore, $\operatorname{InfoGain}(p_i) \to 0$ as $p_i \to 0$ or $p_i \to 1$, meaning extremely hard or extremely easy prompts provide diminishing value.
\end{itemize}

\begin{wrapfigure}[16]{r}{0.42\linewidth}
\begin{center}
    \centering
    \includegraphics[width=\linewidth]{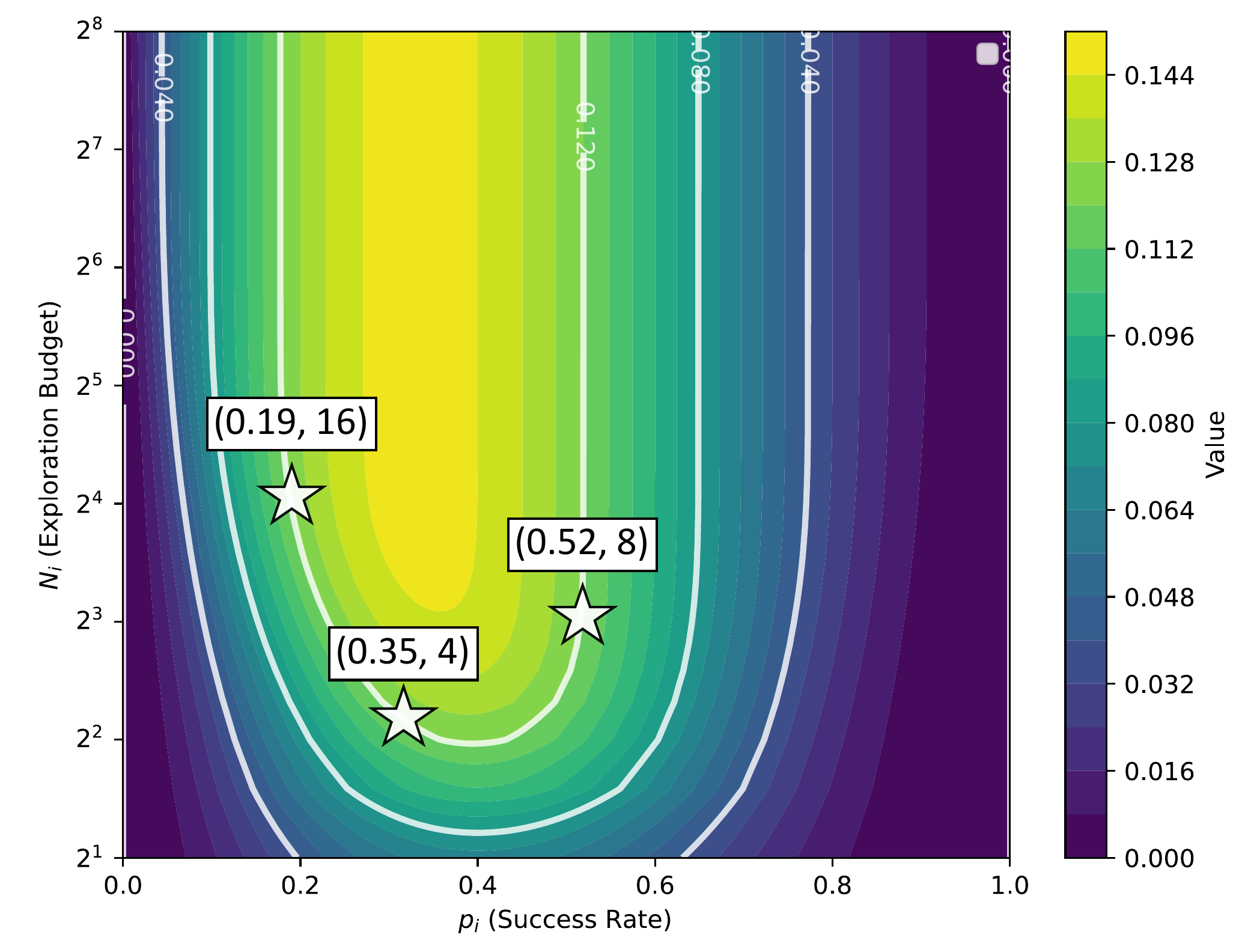}
    \caption{The interplay between success rate, exploration budget and the value.}
    \label{fig:value_vs_budget}
\end{center}
\end{wrapfigure}

We visualize our defined $\operatorname{Value}(N_i, p_i)$ in \cref{fig:value_vs_budget}. This contour plot shows lines of equal value, highlighting the interplay between the success rate $p_i$ and the exploration budget $N_i$. For example, the three highlighted points demonstrate that different combinations of $p_i$ and $N_i$ can yield comparable high values. A task with the success rate $p_i=0.35$ (which is close to $1/3$), requires a relatively small exploration budget of $N_i=4$ to achieve a high value. However, for tasks with success rates further from this optimum, such as a harder task with $p_i=0.19$ or an easier one with $p_i=0.52$, the required exploration budget is now specified as $N_i=16$ or $N_i=8$ respectively to reach the same value level.

\subsection{Algorithm Implementation}
\label{subsec:practical_considerations}

\textbf{Estimating success rates.} In practice, the success rate $p_i$ is not directly available as a prior and must be estimated from collected samples. In this work, we employ a simple heuristic: using the success rates observed in the previous epoch as estimates for the current one. Specifically, the first epoch may follow a homogeneous budget allocation rule, after which the proposed knapsack-based approach leverages the estimated success rates $\widehat{p}_i$ to guide allocation. Although this strategy introduces some delay and noise, it has proven empirically effective. More sophisticated estimation techniques (e.g., online logistic regression) that account for task correlations present promising directions for future improvement.

These estimated $\widehat{p}_i$ values are directly used to formulate the discrete constrained optimization problem (Equation \ref{eq:knapsack_rl}), which can be solved in polynomial time using standard dynamic programming techniques. With Numba \citep{lam2015numba} acceleration, it typically runs within 1–2 seconds. 

\textbf{Handling Extreme Cases.} Our value function defined in \cref{sec:task_value} assigns a zero value to prompts with empirical success rates of $0$ or $1$, which would otherwise lead to zero budget allocation for these prompts. To prevent their complete exclusion and maintain coverage:
\begin{itemize} 
    \item For $\widehat{p}_i = 1.0$ (prompts always solved correctly), the estimate may be not accurate from history samples, so we allocate a small minimum budget (e.g., $2$) to ensure they are still considered. This can be achieved by set $N_{\operatorname{low}}$ in \cref{eq:knapsack_rl}.
    \item For $\widehat{p}_i = 0.0$ (prompts never solved correctly), we employ a fallback allocation strategy. We first estimate the total budget required for prompts with $p_i \in (0,1]$ according to \cref{lem:budget_required} and the above rule. Any remaining budget is subsequently distributed among extremely hard tasks. This strategy is particularly beneficial in later training stages where many prompts become easy, thus freeing up capacity to focus on hard tasks.
\end{itemize}

\textbf{Rollout Balancing.} In practice, the total number of trajectories ($M \times N$) is typically generated by $W$ parallel workers (where $W < M$), often leveraging efficient inference engines like vLLMs \citep{kwon2023efficient}. While a homogeneous allocation rule allows for a simple division of $M$ prompts among $W$ workers (each performing $N$ rollouts per prompt), our knapsack-based approach can lead to significant imbalance in allocated rollouts per prompt. This occurs because certain prompts may be allocated disproportionately large exploration budgets, creating an uneven workload and potentially leading to GPU idles and inefficient resource utilization.

To address this issue, we employ a simple rollout balancing strategy: we treat each allocated rollout for a prompt as an individual execution job. These execution jobs are then randomly dispatched to the available workers, with the inference engine generating one response per prompt. This approach is suitable for settings where prompts are not excessively long, thus not strictly requiring advanced techniques like prefix caching. For scenarios involving longer prompts, we would consider using the Karmarkar–Karp bin-packing algorithm \citep{karmarkar1982efficient} to group prompts into approximately balanced batches based on their allocated budgets. Workers would then process these balanced groups of prompts, potentially utilizing prefix caching.

Overall, our knapsack-based exploration method integrates seamlessly into large-scale RL training pipelines with minimal modifications (see Listing \ref{lst:knapsack_rl} in the Appendix). Computationally, it adds negligible overhead. Algorithmically, it introduces no additional hyperparameters to tune and does not bias policy gradients. From a systems perspective, core components of inference (e.g., vLLM-based accelerated generation \citep{kwon2023efficient}) and training (e.g., FSDP \citep{zhao2023pytorch} and Megatron \citep{megatron-lm}) remain unchanged, ensuring full compatibility with existing infrastructure.

\section{Experiments}

\subsection{Main Results}

\textbf{Experiment Setting.} We implement Knapsack-RL and baseline methods using the large-scale RL training framework \ttt{Verl} \citep{sheng2025hybridflow}. Our primary focus is GRPO \citep{shao2024deepseekmath}, a widely examined method, and we refer to our specific implementation as \emph{Knapsack-GRPO}. Training utilizes the {DAPO-Math-17K} dataset \citep{yu2025dapo}, which comprises 17,917 prompts, each with a ground truth answer for verification.

We conduct experiments with both pre-trained and instruction-tuned models. The pre-trained models include Qwen3-4B-Base \citep{yang2025qwen3} and Qwen2.5-Math-7B \citep{yang2024qwen2}. For instruction-tuned models, we utilize DeepSeek-R1-Distill-Qwen-1.5B \citep{guo2025deepseek} (abbreviated as DPSK-R1-Distill-1.5B) and Qwen3-4B-Instruct-2507 \citep{yang2025qwen3} (abbreviated as Qwen3-4B-Instruct). In each iteration, we employ a mini-batch size of $M = 256$ prompts and generate $N = 8$ rollouts. Our models are trained for 1,000 iterations.  The extensive training duration of 1,000 iterations for Qwen2.5-Math-7B, for example, requires about 1,400 GPU hours with A100 GPUs.

For evaluation, we follow \citep{deepscaler2025} and assess our method on several mathematical reasoning benchmarks: AIME, AMC, MATH, MINERVA, and OLYMPIAD Bench (OLYMPIAD for short). Given AIME's small sample size, we combine its 2024 and 2025 editions into a single dataset, hereafter referred to as AIME. Additionally, we include GPQA \citep{rein2023gpqa} as an out-of-domain evaluation, which tests scientific reasoning across physics, chemistry, and biology. All reported performance metrics are averaged over 16 generated responses.

\begin{table}[htbp]
    \centering
    \caption{Evaluation performance (\ttt{avg@16}) comparison across different models and benchmarks.}
    \label{tab:main_results}
    \begin{tabular}{l|ccccccc}
    \toprule 
                & {\footnotesize \rotatebox{45}{AIME}} & { \footnotesize \rotatebox{45}{AMC}} & { \footnotesize \rotatebox{45}{MATH}} & {\footnotesize \rotatebox{45}{MINERVA} } & {\footnotesize \rotatebox{45}{OLYMPIAD} } & { \footnotesize  \rotatebox{45}{GPQA} } &  {\footnotesize \rotatebox{45}{Avg} } \\
    \midrule
    DPSK-R1-Distill-1.5B     &  25.3 & 62.1 & 81.4 & 25.8 &  41.7 & 39.1 & 42.9 \\
    + GRPO  & 27.6 & 71.1 & 84.0 & 27.6 & 46.4 & 36.7 &  45.9 \\ 
    \rowcolor{blue!10} + Knapsack-GRPO & \textbf{34.0}  & 	\textbf{75.1} & \textbf{86.7} &	\textbf{28.5}	& \textbf{49.7} & 	\textbf{40.3}	& \textbf{49.7} \\  \hline 
        Qwen3-4B-Base & 6.6 & 29.9 & 48.0 & 19.4 & 23.1 & 26.4 & 22.9 \\
    + GRPO & 20.7 & 56.9 & 80.6 & 31.9 &  44.9 & \textbf{46.6} & 43.2 \\
     \rowcolor{blue!10} + Knapsack-GRPO & \textbf{20.8} & \textbf{66.0} & \textbf{81.0} & \textbf{35.7} & \textbf{46.2} & 45.5 & \textbf{45.1} \\
    \hline 
    Qwen3-4B-Instruct  & 47.7 & 82.5 & 92.4 & 35.4 & 61.6 & 43.0 & 58.6 \\
    + GRPO & 47.0 & \textbf{84.9} & \textbf{92.5} & \textbf{41.8} & 61.8 & 54.4 & 59.2 \\
    \rowcolor{blue!10} + Knapsack-GRPO& \textbf{48.2} & 83.1 & \textbf{92.5} & 38.2 & \textbf{63.5} & \textbf{59.9} & \textbf{61.9} \\
    \hline
    Qwen2.5-Math-7B  & 12.3 & 41.0 & 61.2 & 11.8 & 26.1 & 22.0 & 26.7 \\
    + GRPO & 23.9  & 70.6 & 81.7 & 33.6 & 41.9 & 40.8 & 45.2 \\
    \rowcolor{blue!10} + Knapsack-GRPO & \textbf{24.3} & \textbf{77.4} & \textbf{83.9} & \textbf{34.5} & \textbf{44.1} & \textbf{43.8} & \textbf{47.5}  \\  \bottomrule 
    \end{tabular}%
\end{table}

We report the evaluation performance in \cref{tab:main_results}, observing consistent improvements across all tested models after applying our RL training. Specifically, Knapsack-GRPO consistently outperforms GRPO. For instance, in terms of average performance, it improves by 3.8 points for DPSK-R1-Distill-1.5B compared to GRPO. On specific benchmarks, the improvements are even more significant: for example, 6.4 points on AIME for DPSK-R1-Distill-1.5B, 9.1 points on AMC for Qwen3-4B-Base, 5.5 points on GPQA for Qwen3-4B-Instruct, and 6.8 points on AMC for Qwen2.5-Math-7B.

\subsection{Understanding Knapsack-based Exploration}

This section delves into understanding the superiority of knapsack-based exploration. We visualize its exploration budget distribution and analyze its efficacy through gradient effectiveness and task status dynamics during training, primarily focusing on the Qwen2.5-Math-7B model. We provide additional results in Appendix \ref{sec:additional_results}.

\begin{wrapfigure}[17]{r}{0.5\linewidth}
\begin{center}
    \centering
    \includegraphics[width=\linewidth]{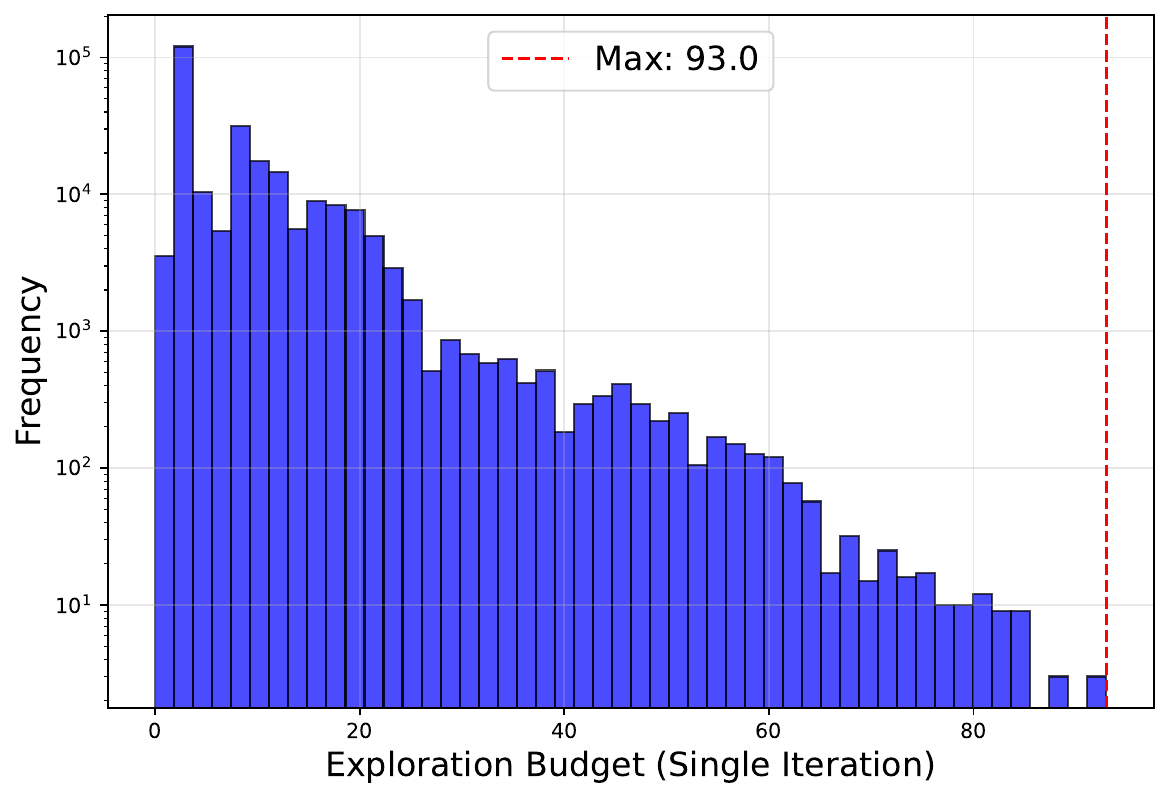}
    \caption{Distribution of exploration budgets allocated by knapsack-GRPO for Qwen2.5-Math-7B during training.}
    \label{fig:exploration_budget}
\end{center}
\end{wrapfigure}

\textbf{Exploration Budgets.} To illustrate the impact of knapsack-based exploration, we visualize the assigned exploration budgets. Specifically, we quantify the frequency with which different exploration budgets are allocated to individual prompts during the training. These results are presented in \cref{fig:exploration_budget}. We observe that, even without introducing additional computational resources, our approach can dynamically assign up to 93 exploration budgets to certain tasks. This level of dynamic, high-budget allocation is impractical to achieve under a conventional homogeneous budget allocation framework.

\textbf{Effective Gradient Ratio.} Figure \ref{fig:effective_gradient_ratio} shows the effective gradient ratio during training, as defined in \cref{eq:effective_gradient_ratio}. Knapsack-based budget allocation improves this ratio by approximately 20-40\% across models. Unlike uniform allocation, the knapsack method avoids a clear decreasing trend. This stems from dynamically distributing exploration budgets, targeting tasks with mixed successful and failed trajectories. These observations partially explain Knapsack-GRPO's policy improvements.

\begin{figure}[htbp]
    \centering
    \begin{subfigure}{0.32\linewidth}
      \centering
      \includegraphics[width=\linewidth]{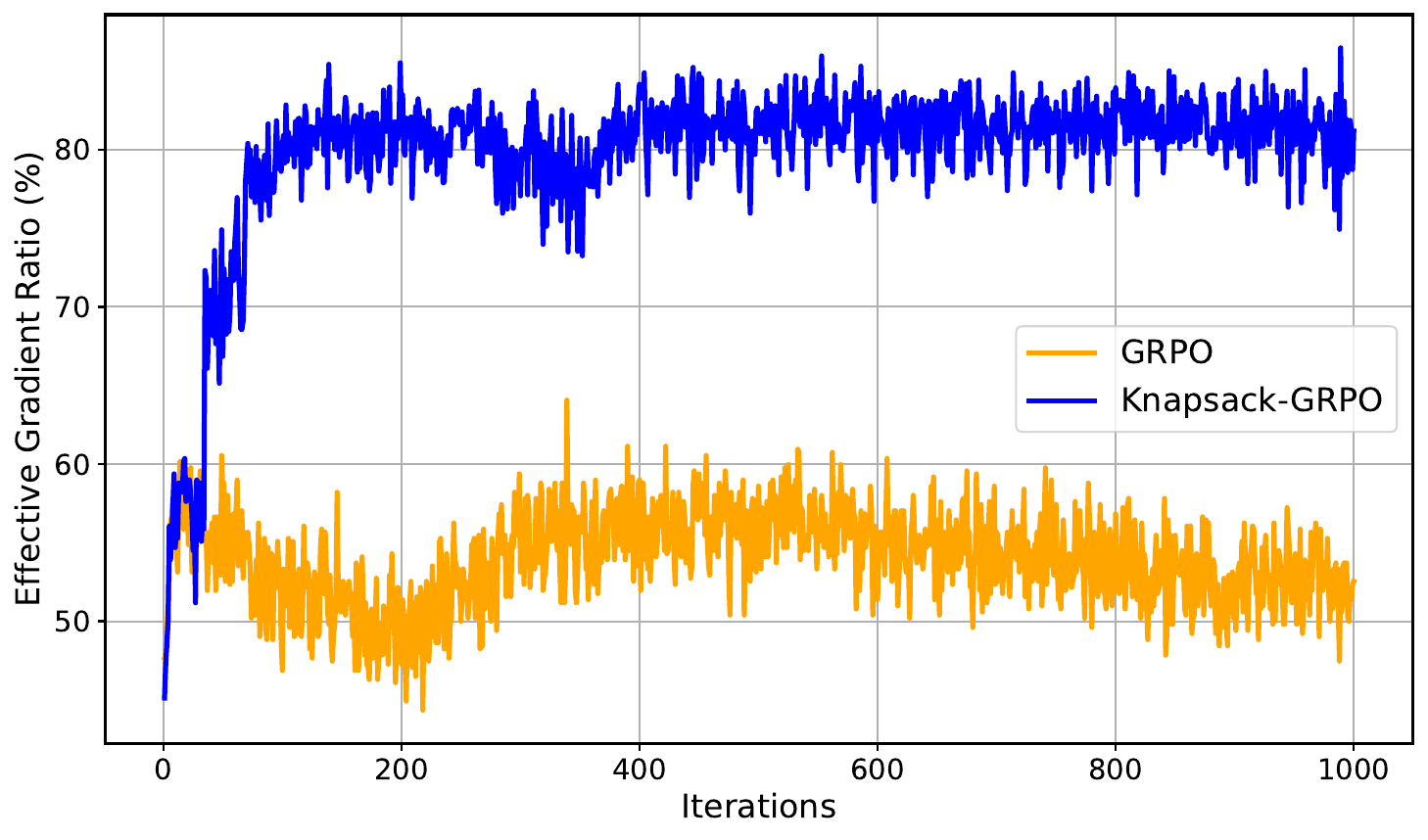} 
      \caption{DPSK-R1-Distill-1.5B}
    \end{subfigure}
    \hfill
    \begin{subfigure}{0.32\linewidth}
      \centering
      \includegraphics[width=\linewidth]{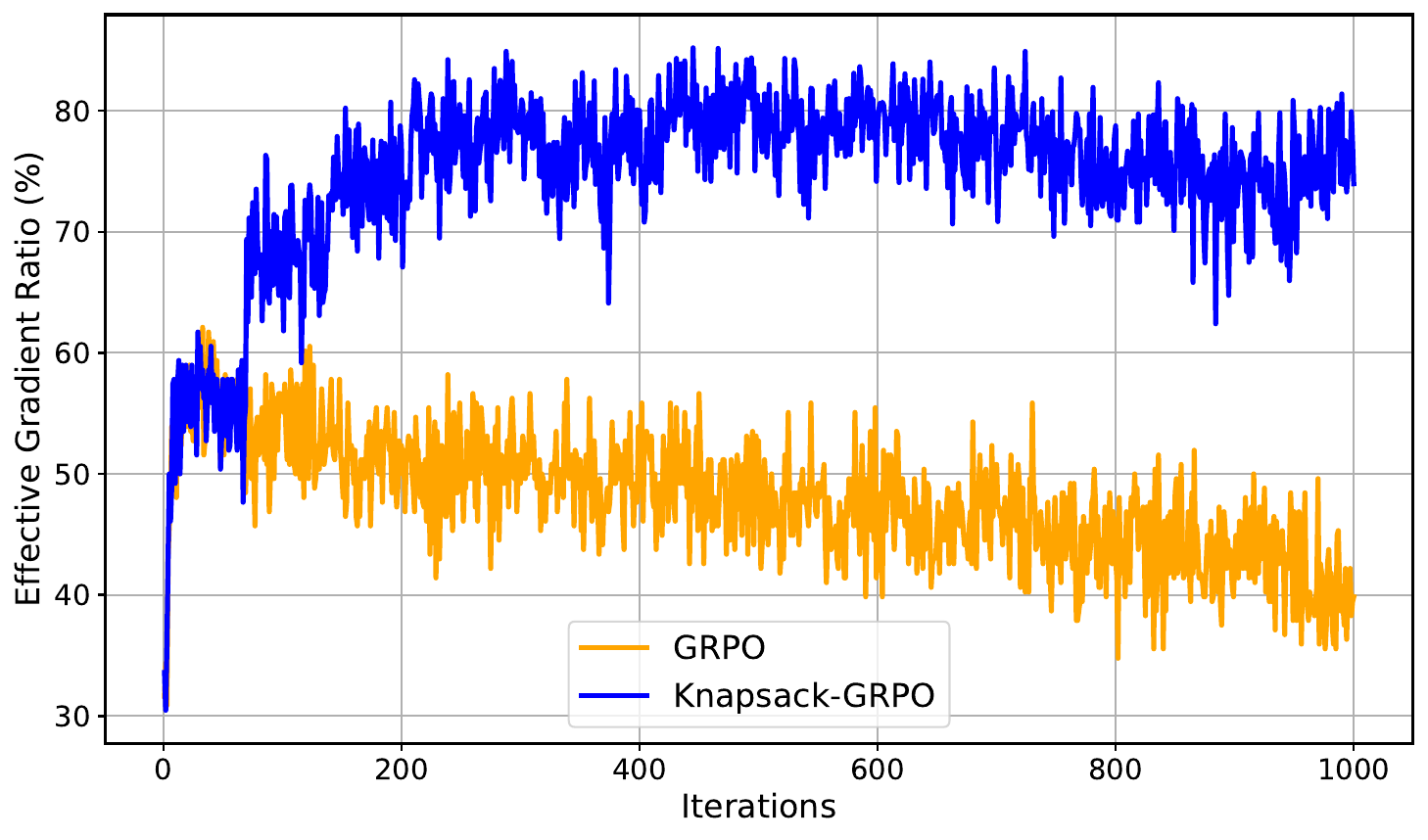}  
      \caption{Qwen3-4B}
    \end{subfigure}
    \begin{subfigure}{0.32\linewidth}
      \centering
      \includegraphics[width=\linewidth]{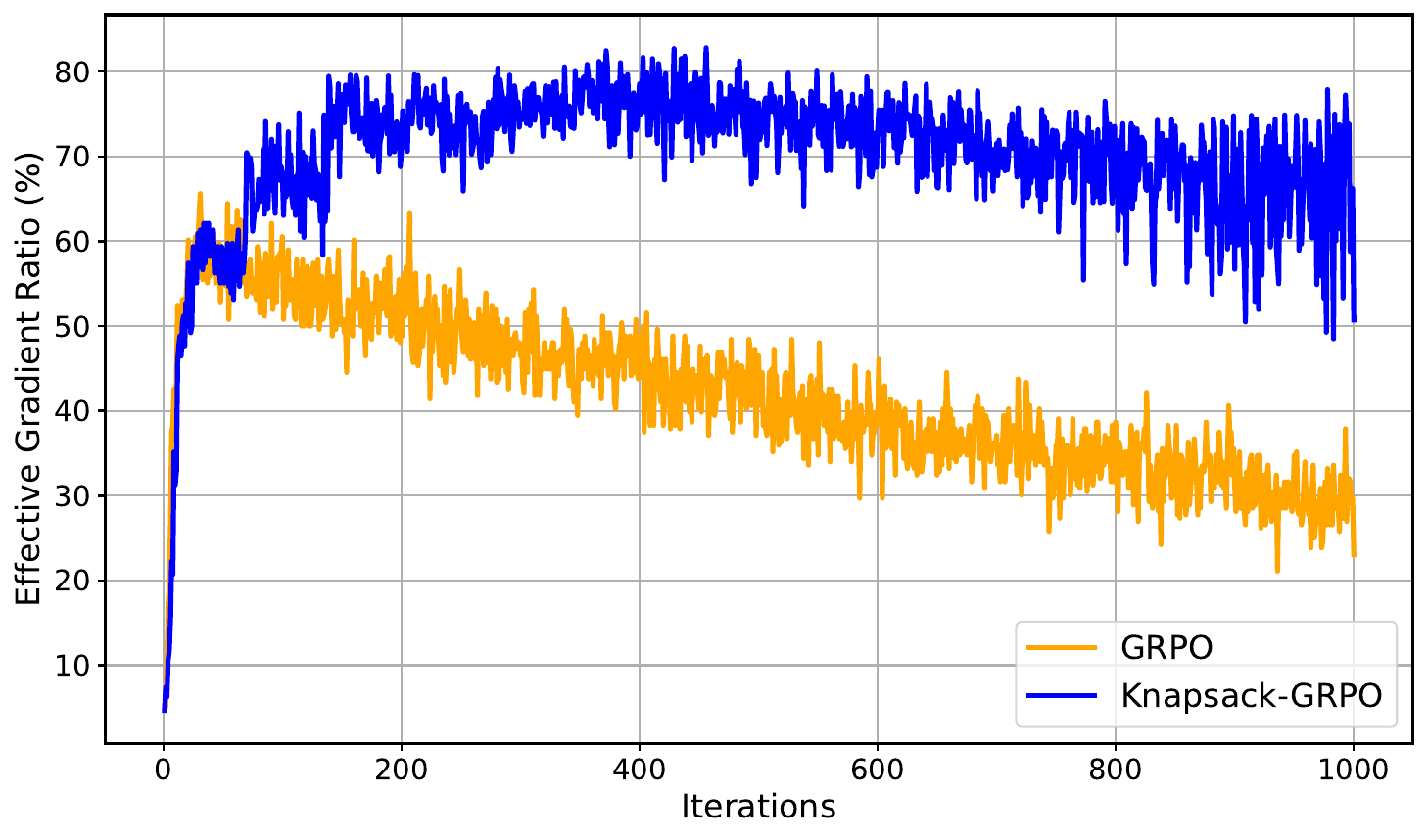}  
      \caption{Qwen2.5-Math-7B}
    \end{subfigure}
    \caption{Effective gradient ratio during training.}
    \label{fig:effective_gradient_ratio}
\end{figure}

\textbf{Task Transition Dynamics.}  To understand our method's influence on learning, we analyze prompt evolution during training. We categorize training prompts into five performance statuses based on success rate ($p_i$): \ttt{extremely-hard} ($p_i = 0$, all failures), \ttt{hard} ($0 < p_i \le 0.2$), \ttt{medium} ($0.2 < p_i < 0.8$), \ttt{easy} ($0.8 \le p_i < 1.0$), and \ttt{extremely-easy} ($p_i = 1.0$, all successes). Our analysis covers two aspects: 1) prompt status transitions after training, and 2) final prompt status distribution.

\begin{figure}[htbp]
    \centering
    \begin{subfigure}{0.45\linewidth}
      \centering
      \includegraphics[width=\linewidth]{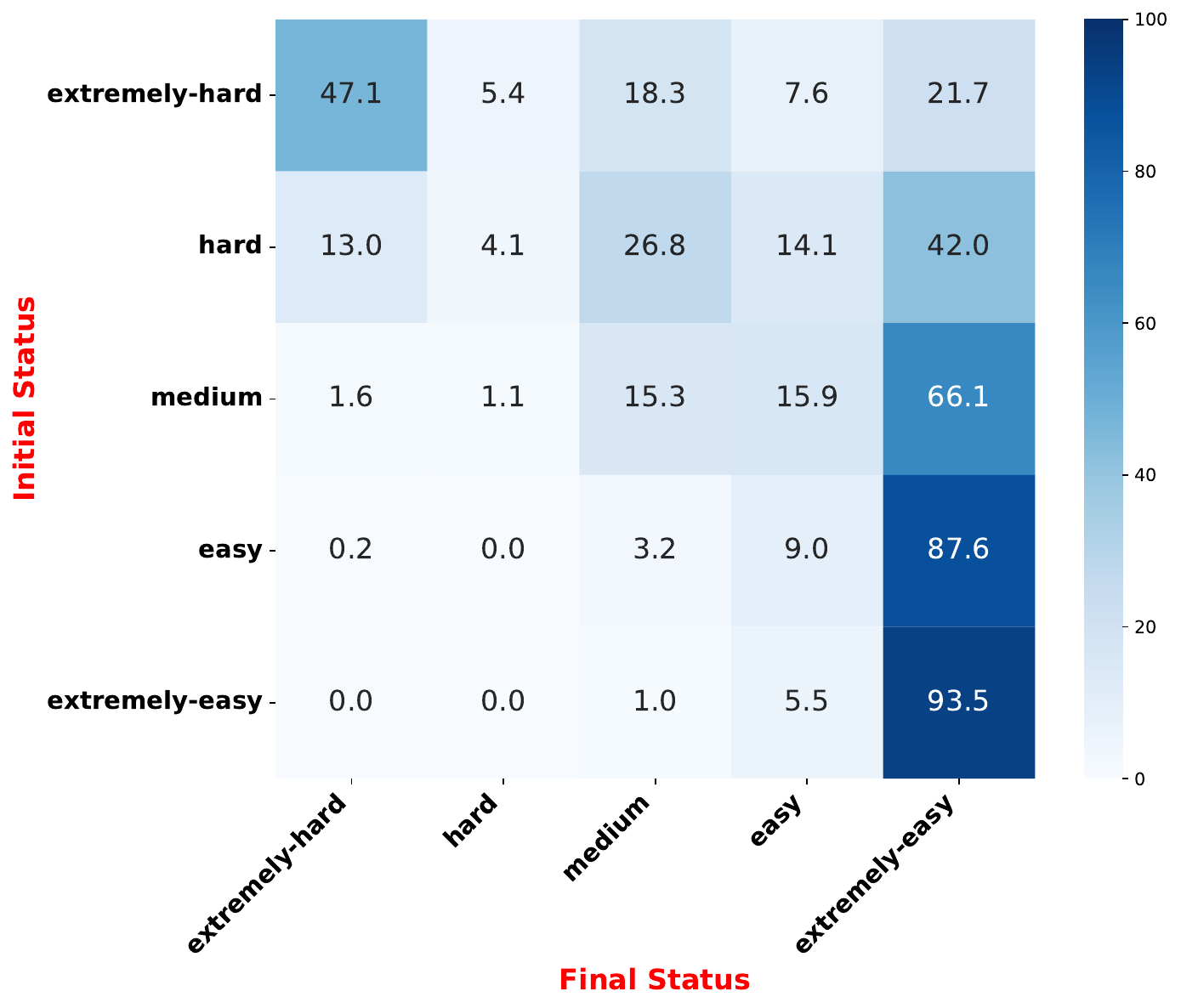} 
      \caption{GRPO}
    \end{subfigure}
    \hfill
    \begin{subfigure}{0.45\linewidth}
      \centering
      \includegraphics[width=\linewidth]{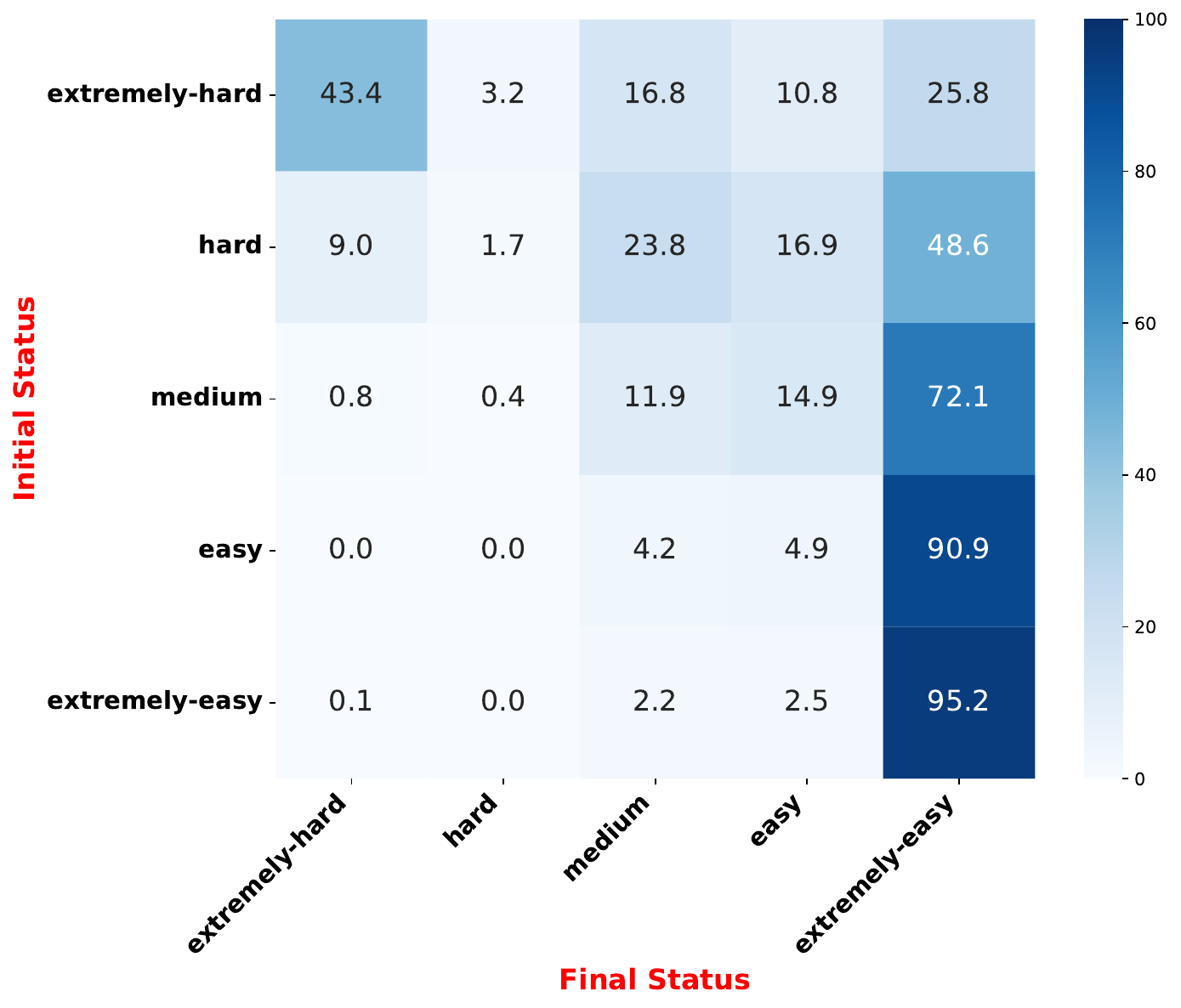}  
      \caption{Knapsack-GRPO}
    \end{subfigure}
    \caption{Prompt transition matrices for Qwen2.5-Math-7B during training. The cell $(i, j)$ indicates the percentage of samples transitioning from status $i$ to status $j$.}
    \label{fig:data_transition}
\end{figure}

Figure \ref{fig:data_transition} visualizes the $5 \times 5$ transition matrix for Qwen2.5-Math-7B training, illustrating prompt category transitions. Knapsack-GRPO demonstrates superior efficiency in learning challenging tasks. Specifically, the self-absorption frequency for \ttt{extremely-hard} samples (prompts remaining in that status) is 43.4\% for Knapsack-GRPO, notably lower than GRPO's 47.1\%. Furthermore, Knapsack-GRPO shows a higher transition rate to \ttt{extremely-easy} tasks (last column in heatmap) than GRPO, indicating more effectively mastered samples.

We also examine the final distribution of prompt statuses after training, specifically by counting the training samples in each status, as depicted in Figure \ref{fig:final_status}. Knapsack-GRPO has 3,596 \ttt{extremely-hard} tasks, less than GRPO's 3,793. This 197-task reduction suggests Knapsack-GRPO's dynamic budget allocation makes them more tractable. Consistent with observed transitions, Knapsack-GRPO yields 9,274 \ttt{extremely-easy} tasks, surpassing GRPO's 8,676.

\begin{wrapfigure}[15]{r}{0.4\linewidth}
\begin{center}
    \centering
    \includegraphics[width=\linewidth]{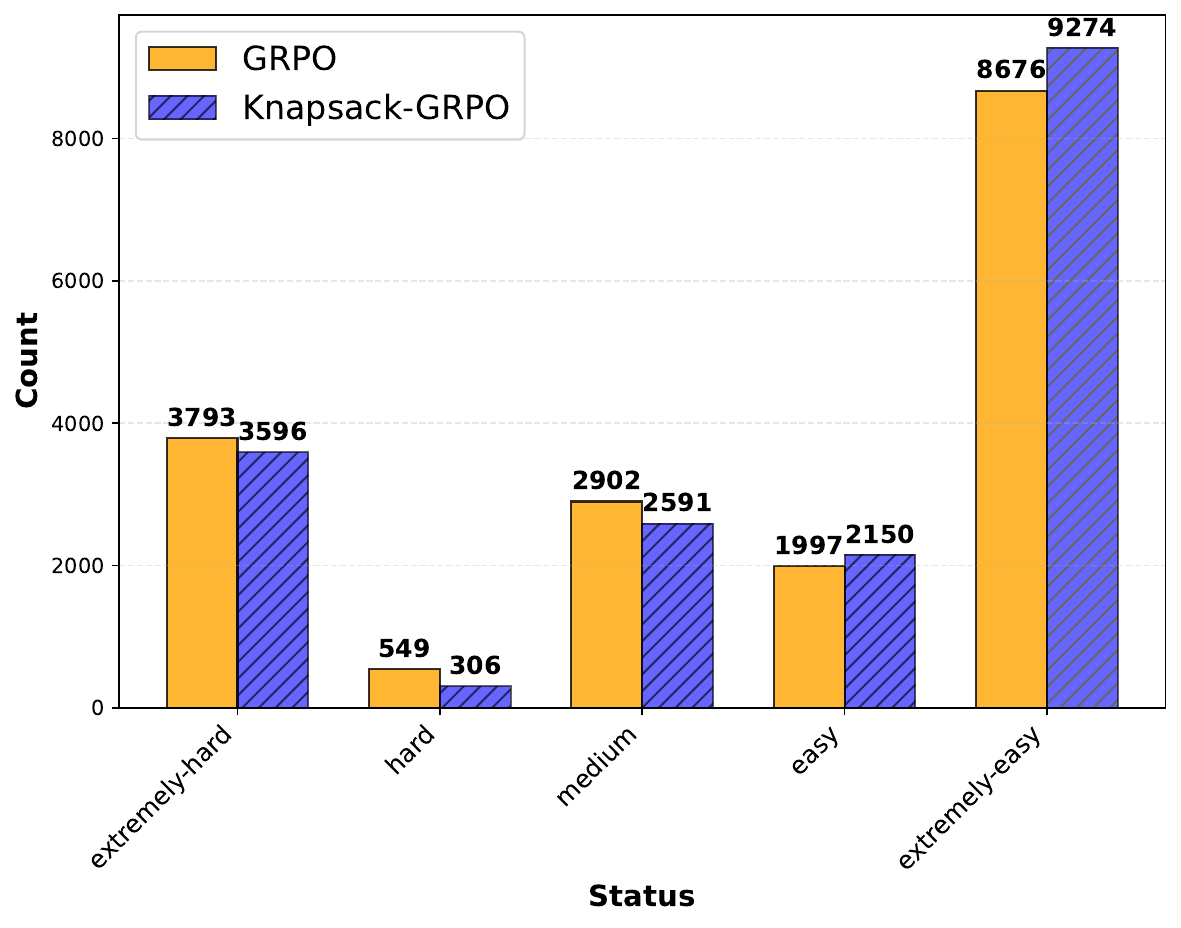}
    \caption{Distribution of sample statuses after training.}
    \label{fig:final_status}
\end{center}
\end{wrapfigure}

Despite these promising results, approximately 20\% of prompts remain in the \ttt{extremely-hard} category even after 1,000 training iterations. We investigate if these are truly unsolvable: for Knapsack-GRPO, 577 of these challenging prompts recorded at least one positive trajectory during optimization, implying they are not inherently unsolvable. Future research could explore experience replay techniques to address these samples more effectively.


\subsection{Experiments with Different Exploration Budgets}

Finally, we conduct experiments with varying total exploration budgets to assess performance under different computational resource constraints. In contrast to previous experiments, which used a total budget of $N_{\text{total}} = 256 \times 8 = 2048$, here we explore scenarios with $N_{\operatorname{total}} = 1024$ and $N_{\operatorname{total}} = 4096$. For the vanilla GRPO, this corresponds to using $N = 4$ and $N = 16$, respectively. Note that the total budget parameter $N$ does not impact Knapsack-GRPO in the same way, given its distinct allocation strategy.

\begin{wrapfigure}[13]{r}{0.4\linewidth}
\begin{center}
    \centering
    \includegraphics[width=\linewidth]{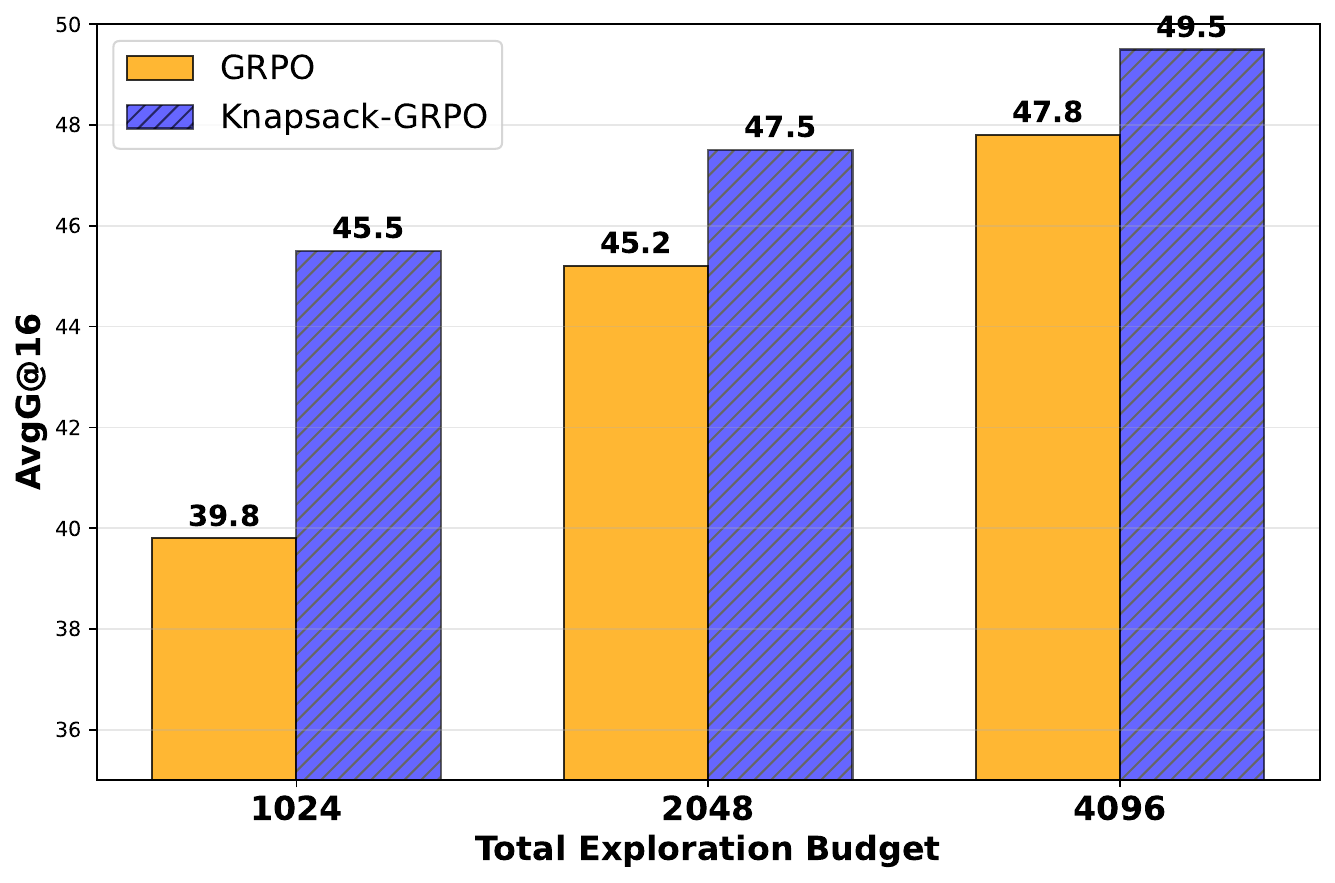}
    \caption{Performance comparison under different exploration budgets.}
    \label{fig:scaling_n}
\end{center}
\end{wrapfigure}

The results for the Qwen2.5-Math-7B model are shown in \cref{fig:scaling_n}. Knapsack-GRPO clearly outperforms GRPO, particularly when computational resources are limited. In the low-budget scenario, Knapsack-GRPO improves performance from 39.8 to 45.5, while continuing to maintain its advantage even with higher exploration budgets. These findings demonstrate that Knapsack-GRPO achieves the same performance as standard GRPO with roughly 2x the computational resources, highlighting the efficiency of its exploration budget allocation.

\section{Related Work}

\textbf{Data heterogeneity.} A central challenge in RL for LLMs arises from the heterogeneity of training data. Prompts vary substantially in difficulty, leading to diverse reward distributions, and these distributions evolve throughout training, further complicating learning. Prior work has recognized this issue: for example, \citet{li2024remax} observed substantial variations in reward distributions across prompts, which complicated stable gradient estimation. Their solution introduced refined baselines to reduce variance, thereby improving the \emph{exploitation} stage of RL. Following this, many advanced policy optimization methods have been proposed (e.g., \citep{shao2024deepseekmath, ahmadian2024back, yu2025dapo}). We refer readers to surveys \citep{zhang2025survey, wang2025survey} for a broader overview. By contrast, our work directly tackles the \emph{exploration} challenge posed by heterogeneous data, focusing on how to allocate exploration resources more effectively to capture informative trajectories in the first place.

\textbf{Prompt selection and curriculum learning.} Another line of research seeks to improve data efficiency through prompt selection and curriculum design \citep{lin2024rho, zhang2024policy, chen2025self, sun2025prepo}. For example, \citet{chen2025self} used advantage estimates as a proxy for difficulty to construct curricula, while \citet{sun2025prepo} employed perplexity as a difficulty metric. Additionally, \citet{yu2025dapo} presented the concept of "dynamic sampling" to address the issue of sparse gradients; however, it is crucial to clarify that their "sampling" refers to selecting prompts that yield effective gradients, rather than dynamically allocating exploration budgets.

These works primarily operate along the axis of \emph{which prompts to train on}, while maintaining homogeneous exploration per prompt. In contrast, our approach focuses on \emph{how much exploration to allocate} to each prompt. Our work aims to addresses the need for more extensive exploration on challenging tasks directly. To underscore this fundamental difference, consider that prompt selection methods might prioritize a \emph{subset} of simple prompts to achieve a high effective gradient ratio. Our work, however, aims to dynamically design the exploration budget for \emph{all} prompts to ensure that each receives sufficient exploration to generate effective gradients, especially those that are inherently harder.

\textbf{Resource allocation.} Optimizing resource allocation has long been studied in operations research and systems \citep{katoh2013resource, hussain2013survey}. However, connections to RL have been more limited, partly because traditional RL often addresses single-task settings where computational budgets do not require explicit distribution. Theoretical works have considered online exploration under knapsack constraints \citep{chen2020efficient, brantley2020constrained}, but typically in the single-task formulation. By contrast, our formulation of {Knapsack RL} explicitly allocates exploration budgets across multiple tasks from a centralized computational pool.

The most closely related work is \citep{yao2025optimizing}, which also investigates dynamic resource allocation but within rejection sampling and RAFT \citep{dong2023raft} frameworks, focusing on variance reduction. By contrast, our method directly targets online RL, formulating exploration allocation as a knapsack optimization problem that explicitly balances computational cost with expected learning value. In addition, studies such as \citep{zhang2025scaling, wang2025every} consider computation allocation during the \emph{inference} stage, whereas our work addresses it in the \emph{training} stage where learning signals must be actively generated.

\textbf{Scaling RL.} Our method resonates with the principle of test-time scaling \citep{snell2024scaling, brown2024large}, which allocates additional computational resources (e.g., best-of-$N$ sampling, majority voting) to improve response quality. Similarly, our approach leverages extra compute to amplify exploration, thereby enhancing the quality of collected training signals. More broadly, our work aligns with recent efforts that scale compute in post-training to unlock stronger downstream performance \citep{jaech2024openai, liu2025prorl}.

\section{Conclusion}

Motivated by the observation that RL agents require extensive exploration on challenging tasks to gather informative feedback and drive self-improvement, we investigate the problem of optimally allocating computational resources for exploration. We formulate this problem as a knapsack optimization, where each task-budget pair is treated as an item with an associated cost and value. This framework enables us to prioritize harder tasks, thereby yielding more effective gradients and leading to superior policy improvements. This comes at no additional computational cost, effectively offering a "free lunch". 

We view this work as an initial step toward scaling exploration as a means to unlock RL’s potential in LLM post-training. Several directions offer promising avenues for future research:
\begin{itemize}
\item \textbf{Extending beyond rollout counts.} In this work, we focus on exploration budgets measured by the number of rollouts. Future extensions could incorporate other computational factors, such as the token length of responses or the number of interaction turns required in agentic tasks.
\item \textbf{Designing richer value functions.} We model the task value using a first-order Taylor expansion of one-step policy improvement. Exploring alternative formulations of value functions could yield more accurate assessments of learning potential and further enhance allocation strategies.
\item \textbf{Incorporating advanced exploration strategies.} Our study employs simple on-policy rollouts for tractability. While effective, some tasks remain unsolved under this paradigm. More sophisticated strategies—such as tree-based exploration inspired by Monte Carlo Tree Search used in AlphaGo \citep{silver2016mastering}—offer a compelling path forward. Recent work in Tree-RL \citep{hou2025treerl} and TreePO \citep{li2025treepo} demonstrates the promise of techniques like state rollbacks to advantageous intermediates. Integrating such methods with our knapsack-based allocation framework is promising.
\end{itemize}

\bibliographystyle{plainnat}
\bibliography{main}

\clearpage

\beginappendix

\appendix

\section{Proof}
\label{appendix:proof}

\begin{proof}[Proof of \cref{lem:budget_required}]  

We prove both parts of the lemma.

\paragraph{Part 1: High probability bound.} We want to find the minimum $N$ such that $\sP(g_i \neq 0) \geq \alpha$ for a given $\alpha \in (0,1)$. From the problem setup, we have:
\begin{align*}
\sP(g_i \neq 0) &= 1 - p_i^N - (1-p_i)^N,
\end{align*}
For the condition $\sP(g_i \neq 0) \geq \alpha$ to hold, we require:
\begin{align*}
1 - p_i^N - (1-p_i)^N &\geq \alpha \\
p_i^N + (1-p_i)^N &\leq 1 - \alpha. \label{eq:constraint}
\end{align*}
Let $q = \max\{p_i, 1-p_i\}$. Since $p_i \in (0,1)$, we have $q \geq \frac{1}{2}$.  Without loss of generality, assume $p_i \geq \frac{1}{2}$, so $q = p_i$ and $1-p_i \leq p_i$. The case $p_i < \frac{1}{2}$ follows by symmetry.

Since $(1-p_i) \leq p_i$, we have $(1-p_i)^N \leq p_i^N$ for $N \geq 1$. Therefore:
\begin{align*}
p_i^N + (1-p_i)^N \leq 2p_i^N = 2q^N.
\end{align*}
For large $N$, the term $q^N$ dominates $(1-q)^N$ since $q > \frac{1}{2}$. More precisely, we have:
\begin{align}
\lim_{N \to \infty} \frac{(1-q)^N}{q^N} = \lim_{N \to \infty} \left(\frac{1-q}{q}\right)^N = 0,
\end{align}
since $({1-q})/{q} < 1$.

Therefore, for sufficiently large $N$, the constraint \eqref{eq:constraint} is dominated by the term $q^N$:
\begin{align}
q^N &\lesssim 1 - \alpha \quad  \Longleftrightarrow \quad N \ln q \lesssim \ln(1 - \alpha).
\end{align}
Since $q < 1$, we have $\ln q < 0$, which gives:
\begin{align*}
\boxed{N \gtrsim \frac{\ln(1 - \alpha)}{\ln q} = \frac{\ln(1 - \alpha)}{\ln(\max\{p_i, 1-p_i\})}}.
\end{align*}

\paragraph{Part 2: Expected number of rollouts.}  
Let $X_1,X_2,\dots$ be i.i.d.\ Bernoulli random variables with $\Pr(X_i=1)=p\in(0,1)$, where $1$ denotes ``success'' and $0$ denotes ``failure''. Define
\[
N^{\mathrm{first}}\equiv N = \min\{n\ge 1 : \text{both $0$ and $1$ have appeared among }X_1,\dots,X_n\}.
\]

We compute $\mathbb{E}[N]$ by conditioning on the first trial $X_1$.

\medskip
\noindent\textbf{Case 1: $X_1=1$ (probability $p$).}  
After the first success, we still need to wait until the first failure occurs.  
The waiting time for the first failure follows a geometric distribution with success probability $1-p$, whose expectation is $1/(1-p)$.  
Thus
\[
\mathbb{E}[N \mid X_1=1] = 1 + \frac{1}{1-p}.
\]
\noindent\textbf{Case 2: $X_1=0$ (probability $1-p$).}  
By symmetry, we wait for the first success; its waiting time has expectation $1/p$, so
\[
\mathbb{E}[N \mid X_1=0] = 1 + \frac{1}{p}.
\]
\medskip
Applying the law of total expectation:
\begin{align*}
\mathbb{E}[N]
&= p\Bigl(1+\frac{1}{1-p}\Bigr) + (1-p)\Bigl(1+\frac{1}{p}\Bigr) \\[4pt]
&= 1 + \frac{p}{1-p} + \frac{1-p}{p} \\[4pt]
&= \frac{1}{p} + \frac{1}{1-p} - 1.
\end{align*}
Hence, the expected number of rollouts until we first observe both a success and a failure is
\[
\boxed{\displaystyle \mathbb{E}[N^{\mathrm{first}}] = \frac{1}{p} + \frac{1}{1-p} - 1}.
\]
This completes the proof of the second part of Lemma~\ref{lem:budget_required}.
\end{proof}

\begin{proof}[Proof of Proposition \ref{prop:info_value}]
We provide a rigorous derivation under the following assumptions:
\begin{itemize}
    \item The policy follows a softmax distribution: $p_k = \frac{\exp(z_k)}{\sum_{j=1}^K \exp(z_j)}$ for action $k$.
    \item The gradient update follows the policy gradient rule with advantage $A$:
    \begin{align}
        z_k \leftarrow z_k + \eta \cdot A \cdot \sI [k=y] \cdot \nabla_{z_k} \log p_y
    \end{align}
    where $\eta$ is the learning rate and $y$ is the chosen action.
    \item We assume unit learning rate ($\eta = 1$) and unit advantage ($A = 1$) for simplicity.
\end{itemize}

\textbf{Step 1: Taylor expansion.} 
For small parameter changes, the change in success probability can be approximated by:
\begin{align*}
    \Delta p_y \approx \sum_{k=1}^{K} \frac{\partial p_y}{\partial z_k}. \Delta z_k
\end{align*}
\textbf{Step 2: Computing partial derivatives.}
For the softmax probability $p_y = \frac{\exp(z_y)}{\sum_{j=1}^K \exp(z_j)}$, we have:
\begin{align*}
    \frac{\partial p_y}{\partial z_y} &= p_y(1-p_y), \quad \text{ and } \quad \frac{\partial p_y}{\partial z_k} = -p_y p_k, \quad \text{for } k \neq y.
\end{align*}
\textbf{Step 3: Determining parameter updates.}
Under the policy gradient update rule, we have:
\begin{align*}
    \nabla_{z_k} \log p_y = \sI[k=y] - p_k.
\end{align*}
Therefore, the parameter updates are:
\begin{align*}
    \Delta z_y &= \sI[y=y] - p_y = 1 - p_y, \\
    \Delta z_k &= \sI[k=y] - p_k = 0 - p_k = -p_k, \quad \text{for } k \neq y
\end{align*}
\textbf{Step 4: Computing InfoGain.}
Substituting the partial derivatives and parameter updates:
\begin{align*}
    \Delta p_y &= \frac{\partial p_y}{\partial z_y} \Delta z_y + \sum_{k \neq y} \frac{\partial p_y}{\partial z_k} \Delta z_k \\
    &= p_y(1-p_y) \cdot (1-p_y) + \sum_{k \neq y} (-p_y p_k) \cdot (-p_k) \\
    &= p_y(1-p_y)^2 + p_y \sum_{k \neq y} p_k^2
\end{align*}
\textbf{Step 5: Simplification under first-order approximation.}
For the first-order Taylor approximation to be accurate, we require small parameter updates. Under this condition, the cross-terms $\sum_{k \neq y} p_k^2$ are second-order in the update magnitude and can be neglected compared to the main term $p_y(1-p_y)^2$.

Therefore, we obtain:
\begin{align*}
    \operatorname{InfoGain} \approx \boxed{p_y(1-p_y)^2}.
\end{align*}
This completes the proof.

\end{proof}

\begin{wrapfigure}[11]{r}{0.33\linewidth}
\begin{center}
    \centering
    \includegraphics[width=\linewidth]{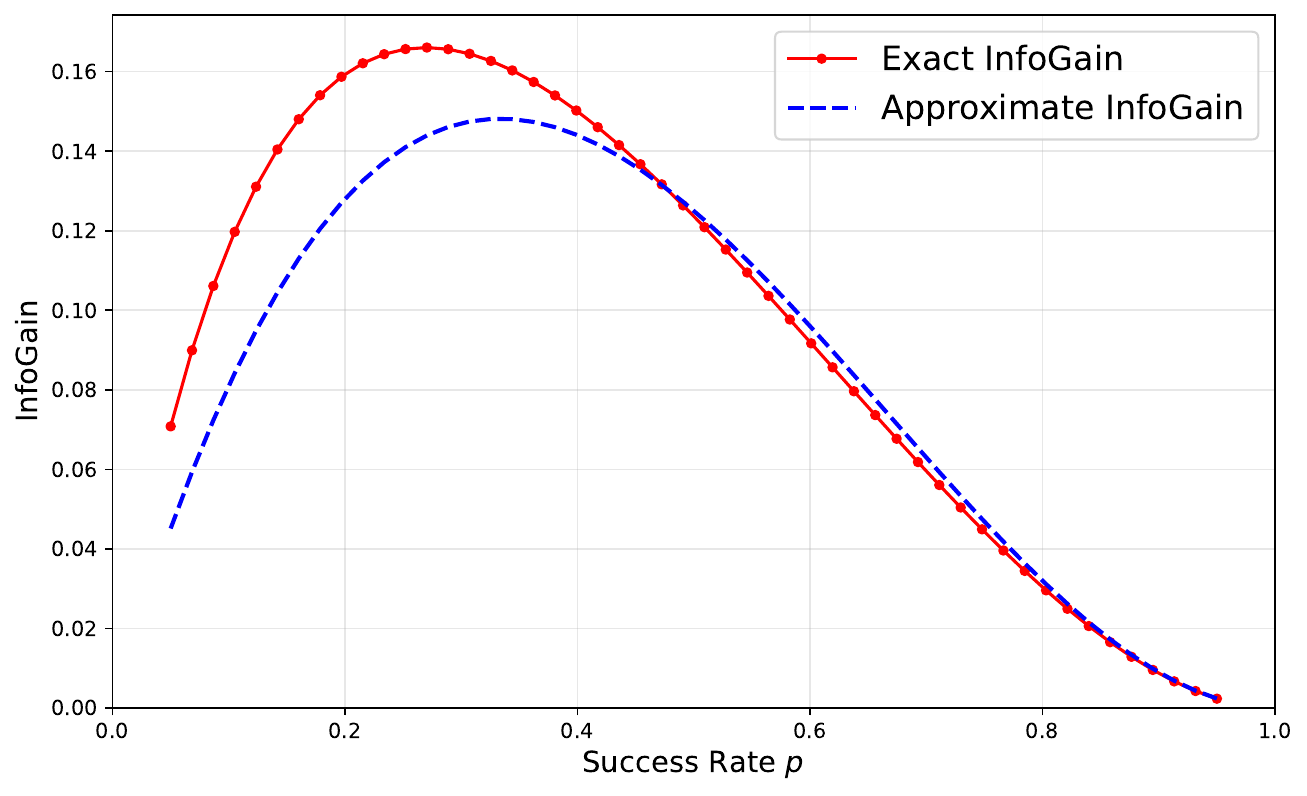}
   \caption{Comparison of exact \texttt{InfoGain} and approximate formula.}
    \label{fig:info_gain_approx}
\end{center}
\end{wrapfigure}

To validate this approximation, we conduct an empirical study with 100 actions, comparing the \texttt{InfoGain} computed through exact gradient updates against our theoretical approximation from \cref{prop:info_value}. As shown in \cref{fig:info_gain_approx}, the two curves align closely across different success rates, demonstrating that our formula $p(1-p)^2$ provides a reliable approximation for practical use.

\section{Extensions}
\label{appendix:extension}

In this work, we mainly focus on the widely used GRPO \citep{shao2024deepseekmath} algorithm to design the optimal allocation strategy. Here we discuss possible extensions for other RL algorithms by adapting the core framework while maintaining the same task value function structure:
\begin{align*}
\operatorname{Value}(N_i, p_i) = \operatorname{ProbNonZeroGradient}(N_i, p_i) \times \operatorname{InfoGain}(p_i).
\end{align*}
The key difference lies in how we compute $\operatorname{ProbNonZeroGradient}(N_i, p_i)$ for different algorithms:
\begin{itemize}
    \item \textbf{RLOO} \citep{ahmadian2024back}. RLOO's policy gradient estimator is equivalent to GRPO up to constants, thus we may not need fundamental changes. The probability of obtaining a non-zero gradient remains:
    \begin{align*}
        \operatorname{ProbNonZeroGradient}(N_i, p_i) = 1 - p_i^{N_i} - (1-p_i)^{N_i}.
    \end{align*}
    \item \textbf{ReMax} \citep{li2024remax}. ReMax leverages the reward of greedy response as baseline, rather than the averaged reward used in GRPO. In this setting, a gradient update occurs only when the sampled trajectory differs from the greedy response. If we denote the probability of the greedy response as $\alpha$, then the probability of sampling a trajectory different from the greedy response is $1-\alpha$. The probability of obtaining a non-zero gradient with $N_i$ samples becomes:
    \begin{align*}
        \operatorname{ProbNonZeroGradient}(N_i, \alpha) = 1 - \alpha^{N_i}.
    \end{align*}
    This represents the probability that at least one of the $N_i$ sampled trajectories differs from the greedy response, thereby producing a gradient signal.
    \item \textbf{REINFORCE} \citep{williams1992simple}. There is no baseline design in vanilla REINFORCE. We can directly calculate the $\operatorname{ProbNonZeroGradient}$ to account for the case where at least one trajectory receives a positive reward:
    \begin{align*}
        \operatorname{ProbNonZeroGradient}(N_i, p_i) = 1 - (1 - p_i)^{N_i}.
    \end{align*}
    This formulation is simpler than GRPO since we only need to ensure at least one successful trajectory occurs, rather than balancing positive and negative samples.
\end{itemize}

The proposed framework's modularity allows for straightforward adaptation to other RL algorithms by: (1) identifying the algorithm's gradient computation mechanism, (2) determining conditions for non-zero gradients, (3) calculating the corresponding $\operatorname{ProbNonZeroGradient}$ function, and (4) maintaining the same $\operatorname{InfoGain}(p_i) = p_i(1-p_i)^2$ formulation across algorithms. This demonstrates the general applicability of our value-based budget allocation approach beyond the specific GRPO implementation.

\section{Experiment Details}
\label{appendix:experiment_details}

Our experiments utilized the large-scale RL training framework \ttt{Verl}, specifically version 0.5.0. No modifications were made to the core training and inference code, with the exception of the advantage calculation, where values were clipped between -5 and 5. This was implemented because, as rollout responses were scaled, we observed their values could become significantly large in extreme cases, thus requiring this additional clipping for numerical stability.

Following recommendations from \citep{yu2025dapo}, the learning rate was set to $10^{-6}$, with importance sampling clipping ratios (high/low) of $0.28$ and $0.2$, respectively. Neither KL nor entropy regularization was employed. Models were trained with a maximum sequence length of 4K tokens, with the exception of DPSK-R1-Distill-1.5B, which utilized 8K tokens to accommodate its typically longer Chain-of-Thought (CoT) behaviors requiring more context.

For evaluation results reported during training, models were assessed every 10 training iterations using 16 generated responses. To manage evaluation time, 100 evaluation samples were randomly selected from benchmarks when the total number of samples exceeded this number.

For the final evaluation performance presented in \cref{tab:main_results}, different maximum sequence lengths were used to prevent response truncation: 4K tokens for Qwen2.5-Math-7B, 8K tokens for Qwen3-4B and Qwen3-4B-Instruct, and 16K tokens for DPSK-R1-Distill-1.5B. Consequently, these results may not perfectly align with those reported in the training curves.

\begin{lstlisting}[
    caption={Python pseudo code implementation of knapsack RL. Two components are modified: (1) budget allocation is replaced with knapsack optimization for better resource distribution, and (2) task status is updated based on external feedback.},
    label={lst:knapsack_rl},
    abovecaptionskip=2pt,
    belowcaptionskip=7pt,
    language=diffpython,
]
def budget_allocation(batch, total_budget, **kwargs):
-    budget = np.full(len(batch), total_budget // len(batch['prompt']))
+    budget = knapsack(batch['status'], total_budget, **kwargs)
    indices = []
    for task_id, task_budget in enumerate(budget):
        if task_budget > 0:
            indices.extend([task_id] * task_budget)
    return batch.select_idxs(indices)

(*@\lstbg{blue!20}{gen\_batch = budget\_allocation(batch, total\_budget, **kwargs)}@*)
if rollout_balancing:
    indicies = np.random.shuffle(np.arange(len(batch['prompt'])))
    batch = batch.select_idxs(indicies)
batch = actor.generate_sequences(gen_batch)
batch = compute_rewards_and_advantages(batch)
(*@\lstbg{blue!20}{train\_dataset.update\_status(batch)}@*)
actor.update(batch)
\end{lstlisting}

\section{Additional Results}
\label{sec:additional_results}

\subsection{Visualization of Exploration Process}

\paragraph{Evolution of Prompts.} To illustrate the impact of exploration budgets on individual prompt learning dynamics, we track and visualize the learning trajectories of several randomly selected prompts from the training data in \cref{fig:prompt_learning_dynamics}. Each subplot corresponds to a unique prompt, identified by its index in the title. We observe that for several examples, our framework effectively allocates more exploration budget, leading to complete learning of the prompt (e.g., prompts in the first row, first column, and second row, first column). Conversely, some tasks remain highly challenging, where neither Knapsack-GRPO nor GRPO achieves satisfactory performance (e.g., the prompt in the third row, second column).

\begin{figure}[t]
    \centering
    \includegraphics[width=\linewidth]{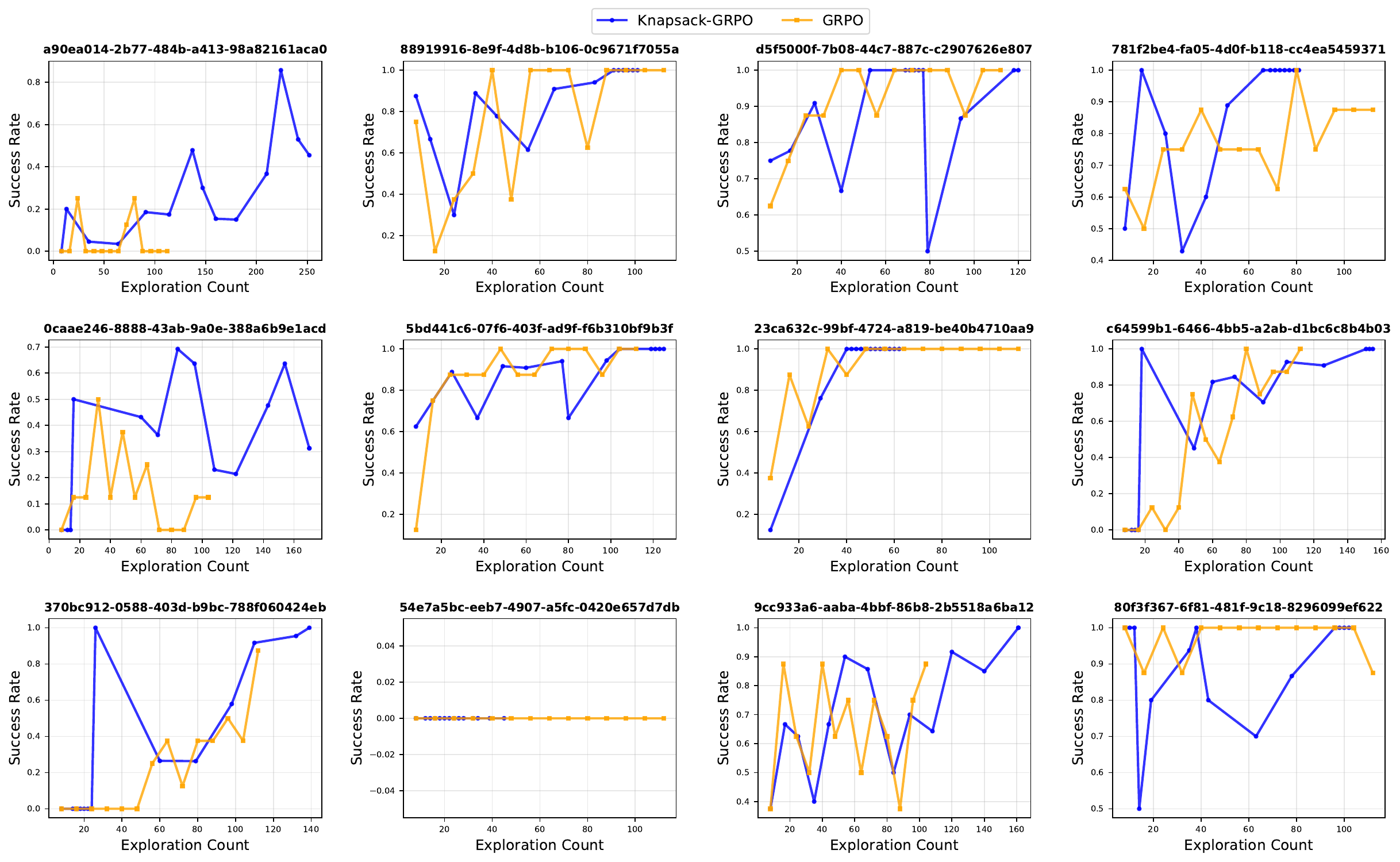}
    \caption{Learning dynamics of randomly selected prompts throughout training, comparing GRPO and Knapsack-GRPO. Each subplot shows the success rate evolution for a specific prompt.}
    \label{fig:prompt_learning_dynamics}
\end{figure}

\subsection{Training curves}
As references, the training curves for all models are displayed in Figures \ref{fig:dpsk_1.5b_training_curve}, \ref{fig:qwen3_4b_base_training_curve}, \ref{fig:qwen3_4b_instruct_training_curve}, and \ref{fig:qwen2.5_math_7b_training_curve}. Compared with the final results in Table \ref{tab:main_results}, these plots further show that Knapsack-GRPO delivers a rapid performance improvement  early in the training process. We also observe a few cases of performance degeneration, which points to the need for exploring more stable policy optimization techniques in future research.

\begin{figure}[htbp]
    \centering
    \includegraphics[width=\linewidth]{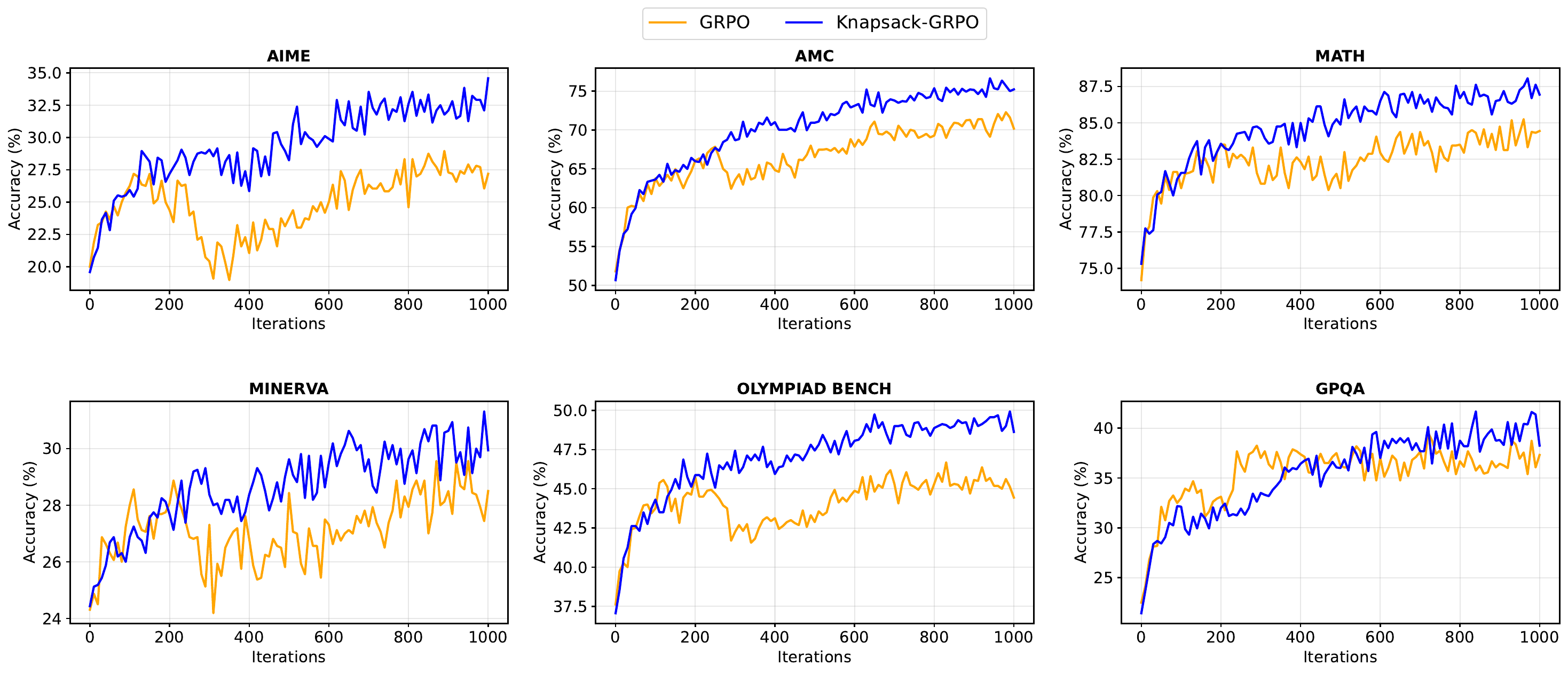}
    \caption{Evaluation performance of DPSK-R1-Distill-1.5B across training iterations.}
    \label{fig:dpsk_1.5b_training_curve}
\end{figure}

\begin{figure}[htbp]
    \centering
    \includegraphics[width=\linewidth]{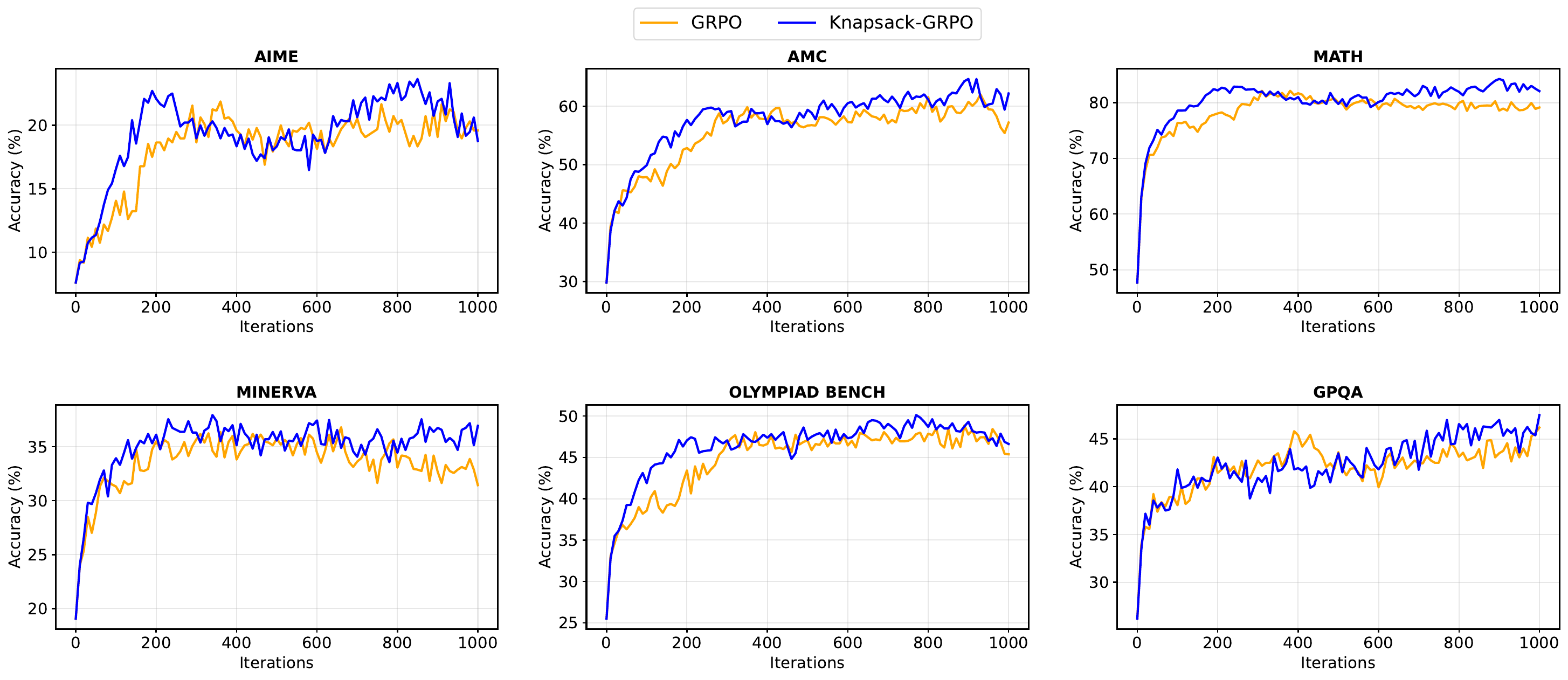}
    \caption{Evaluation performance of Qwen3-4B-Base across training iterations.}
    \label{fig:qwen3_4b_base_training_curve}
\end{figure}

\begin{figure}[htbp]
    \centering
    \includegraphics[width=\linewidth]{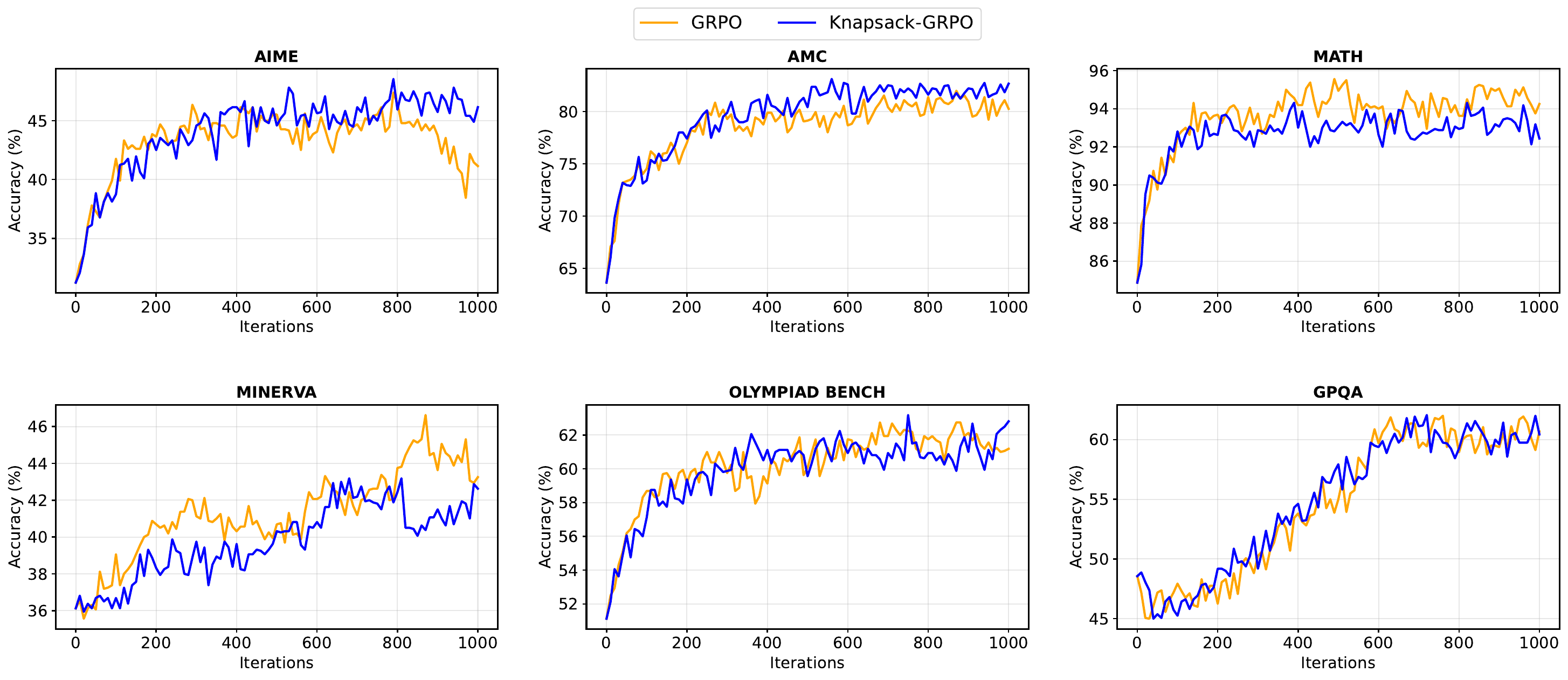}
    \caption{Evaluation performance of Qwen3-4B-Instruct across training iterations.}
    \label{fig:qwen3_4b_instruct_training_curve}
\end{figure}

\begin{figure}[htbp]
    \centering
    \includegraphics[width=\linewidth]{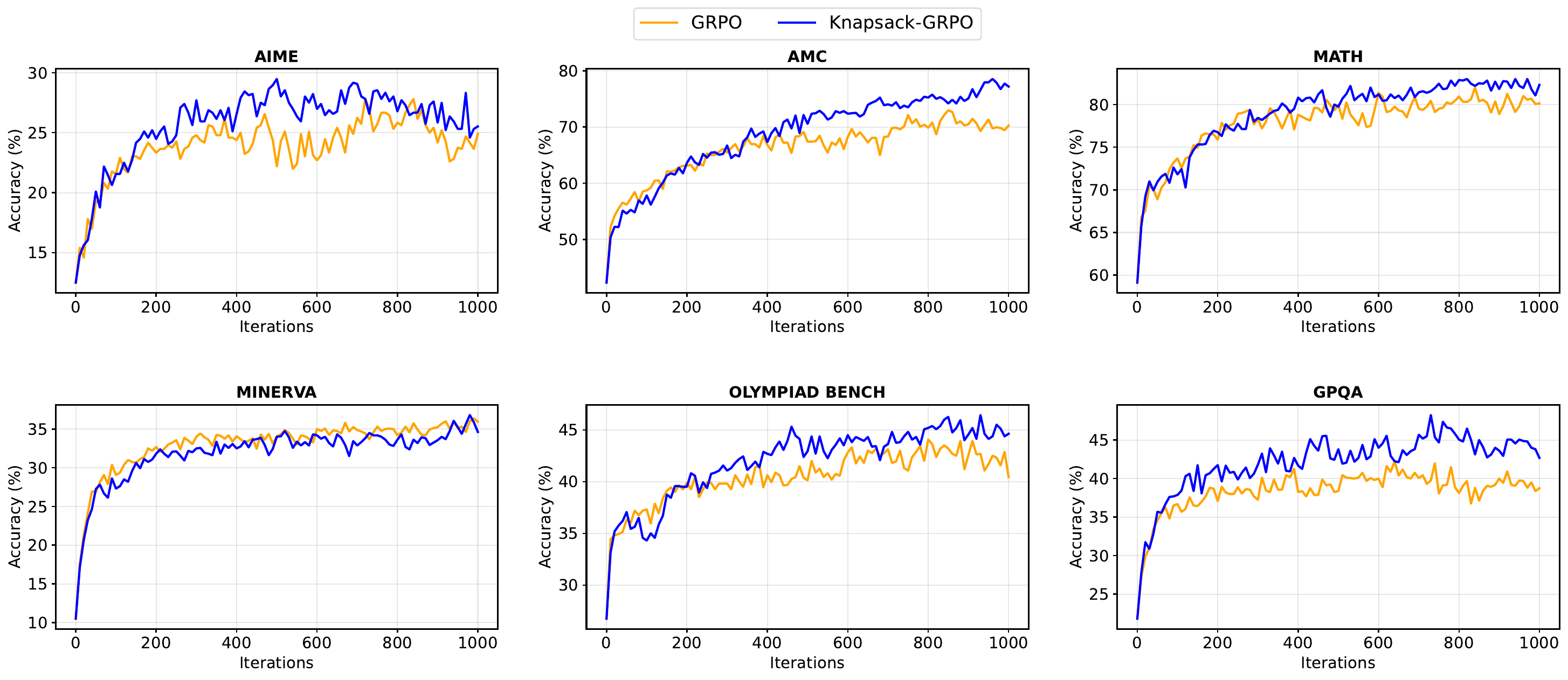}
    \caption{Evaluation performance of Qwen2.5-Math-7B across training iterations.}
    \label{fig:qwen2.5_math_7b_training_curve}
\end{figure}

\subsection{Ablation Studies}

\paragraph{Without Fallback Strategy.} In \ref{subsec:practical_considerations}, we introduced the \emph{fallback strategy}, which reallocates excess exploration budgets from already-solved prompts to those that remain unsolved. This prevents a common failure mode: difficult prompts may otherwise receive too few resources, while easy prompts are oversampled.

\begin{table}[htbp]
\centering
\caption{Comparison of budget allocation with and without fallback strategy.}
\label{tab:compare_fallback}
\begin{tabular}{c|cccc||cccc}
\toprule
& \multicolumn{4}{c||}{\textbf{With Fallback Strategy}} & \multicolumn{4}{c}{\textbf{Without Fallback Strategy}} \\
\cmidrule{2-9}
\textbf{Index} & \textbf{Success Rate} & \textbf{Cost} & \textbf{Assignment} & & \textbf{Success Rate} & \textbf{Cost} & \textbf{Assignment} & \\
\midrule
1 & 0.0 & $\infty$ & 29 & & 0.0 & $\infty$ & 2 & \\
2 & 0.9 & 22 & 23 & & 0.9 & 22 & 50 & \\
3 & 1.0 & 0 & 2 & & 1.0 & 0.0 & 2 & \\
4 & 1.0 & 0 & 2 & & 1.0 & 0.0 & 2 & \\
5 & 1.0 & 0 & 2 & & 1.0 & 0.0 & 2 & \\
6 & 1.0 & 0 & 2 & & 1.0 & 0.0 & 2 & \\
7 & 1.0 & 0 & 2 & & 1.0 & 0.0 & 2 & \\
8 & 1.0 & 0 & 2 & & 1.0 & 0.0 & 2 & \\
\bottomrule
\end{tabular}
\end{table}

A concrete example is shown in \cref{tab:compare_fallback} with $8$ prompts. Without the fallback strategy, the allocation assigns over 50 exploration units to a task with a success rate of $0.9$, while the unsolved task (success rate $0.0$) receives only 2 units. In contrast, with the fallback strategy, the unsolved task is assigned 29 units—substantially increasing its chance of making progress.

Empirically, this design proves crucial (\cref{fig:wo_fallback}). In our experiments with the Qwen2.5-Math-7B model, removing the fallback strategy led to unstable training, large performance fluctuations on benchmarks such as AMC and OlympiadBench, and overall degraded results. This result suggests that neglecting challenging examples during training weakens the reinforcement signal, ultimately harming the model’s ability to generalize.

\begin{figure}[t]
    \centering
    \includegraphics[width=\linewidth]{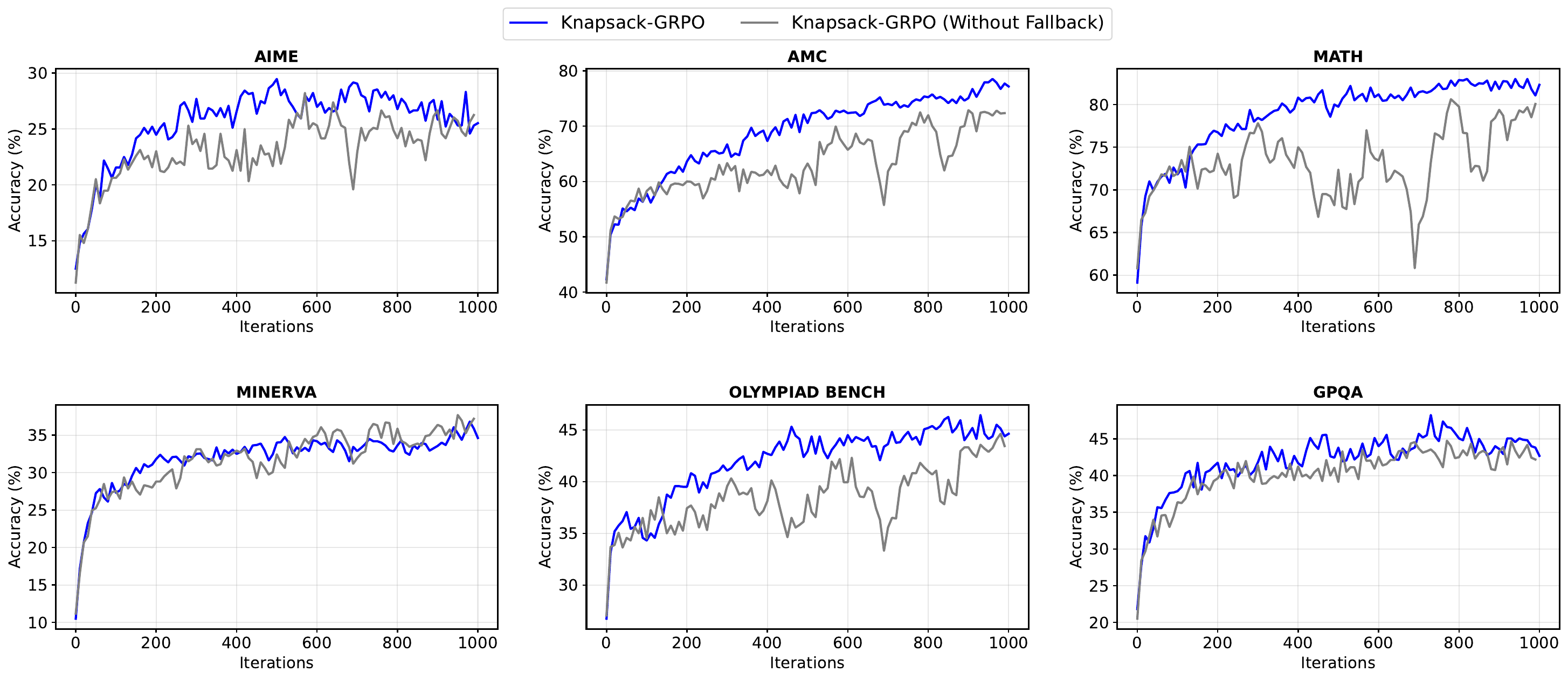}
    \caption{Effect of the fallback strategy. Without it, exploration budgets are disproportionately allocated to prompts with at least one successful trial, while unsolved tasks are largely ignored.}
    \label{fig:wo_fallback}
\end{figure}

\paragraph{Low and Up Bounds.} Our framework incorporates safeguards in the form of hyper-parameters $N_{\operatorname{low}}$ and $N_{\operatorname{up}}$, as defined in \cref{eq:knapsack_rl}. $N_{\operatorname{up}}$ is set to 128 primarily to facilitate faster computation of the knapsack optimization using dynamic programming; its specific value does not critically impact performance. Conversely, $N_{\operatorname{low}}$ is set to 2 to prevent degenerate allocation scenarios, particularly when success rates might be inaccurate, as elaborated in \cref{subsec:practical_considerations}. We present ablation results for these bounds in \cref{fig:low_upper_bounds}, which empirically support these design choices.

\begin{figure}[htbp]
    \centering
    \includegraphics[width=\linewidth]{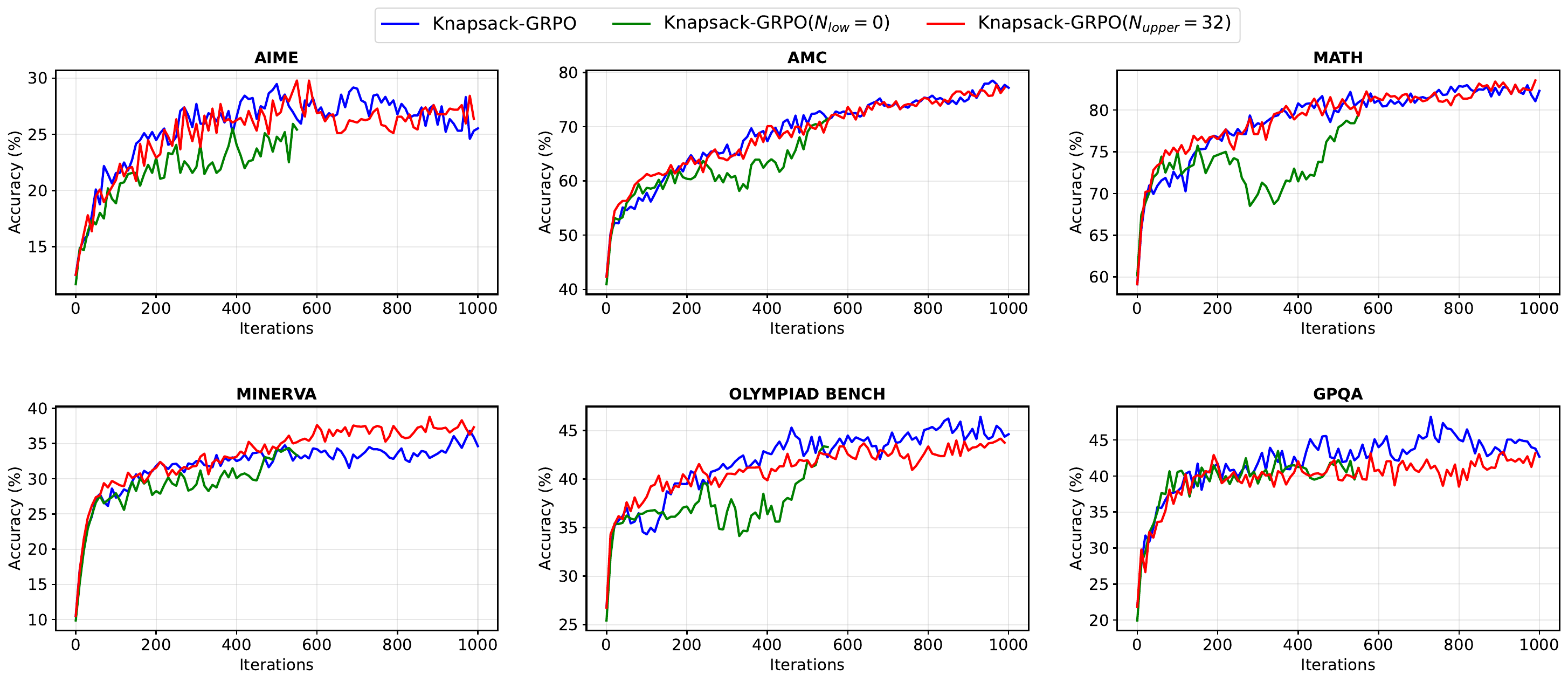}
    \caption{Ablation study on the impact of $N_{\operatorname{low}}$ and $N_{\operatorname{up}}$ constraints within the knapsack optimization framework.}
    \label{fig:low_upper_bounds}
\end{figure}

\subsection{Comparing with Dynamic Sampling in DAPO}

Dynamic sampling, a technique introduced in the DAPO paper \citep{yu2025dapo}, selects prompts with a mix of positive and negative rewards, filtering out those with exclusively positive or negative outcomes. This process is repeated until a target number of prompts is accumulated, a strategy that has been shown to be effective.

While effective, dynamic sampling operates on a different principle than our knapsack-based approach. Dynamic sampling aims to scale up effective \textbf{prompts}, while our method focuses on scaling up effective \textbf{responses}. Since these two approaches are parallel and can be combined, we conducted empirical studies to explore their synergy.

We evaluated the performance of these methods using two different metrics, as shown in the training curves in \cref{fig:dapo_exploration_iterations} and \cref{fig:dapo_gradient_iterations}. Because dynamic sampling requires multiple exploration steps to accumulate enough effective prompts for a single gradient update, we can analyze performance in two ways:
\begin{itemize}
    \item By exploration budget: \cref{fig:dapo_exploration_iterations} shows performance relative to the total number of exploration iterations. This measures how effectively total computation budget is converted into performance gains. We found that dynamic sampling boosts GRPO's performance on benchmarks like AIME and OLYMPIAD, improving the score from 45.2 to 46.2. When we combined dynamic sampling with our knapsack-based exploration, performance on the AMC benchmark improved significantly (from 69.8 to 73.0), resulting in a total performance of 46.5. This is slightly better than dynamic sampling alone but worse than our pure knapsack approach. We attribute this partially to the fact that knapsack-GRPO utilizes more gradient iterations, and therefore do not consider this a negative result.
    \item By gradient update iterations: \cref{fig:dapo_gradient_iterations} displays performance against the number of gradient updates. This metric assesses the value of each gradient update. The results clearly show that effective gradients, whether from dynamic sampling or our knapsack-based exploration, lead to greater performance gains for the same number of update iterations, which validates the core motivation behind both techniques.
\end{itemize}

\begin{figure}[htbp]
    \centering
    \includegraphics[width=0.9\linewidth]{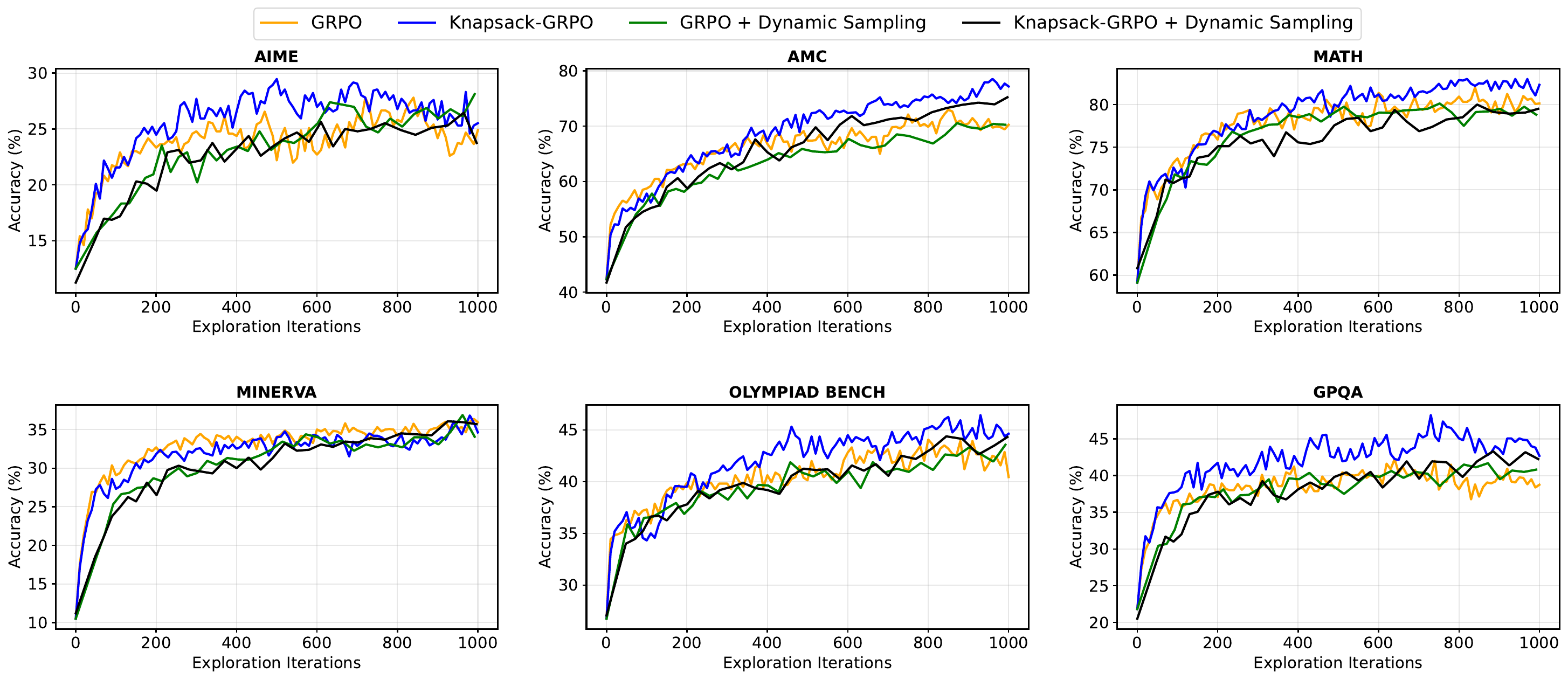}
    \caption{Performance of Qwen2.5-Math-7B relative to the number of exploration iterations, demonstrating how effectively the total computation budget is converted into performance gains.}
    \label{fig:dapo_exploration_iterations}
\end{figure}

\begin{figure}[htbp]
    \centering
    \includegraphics[width=0.9\linewidth]{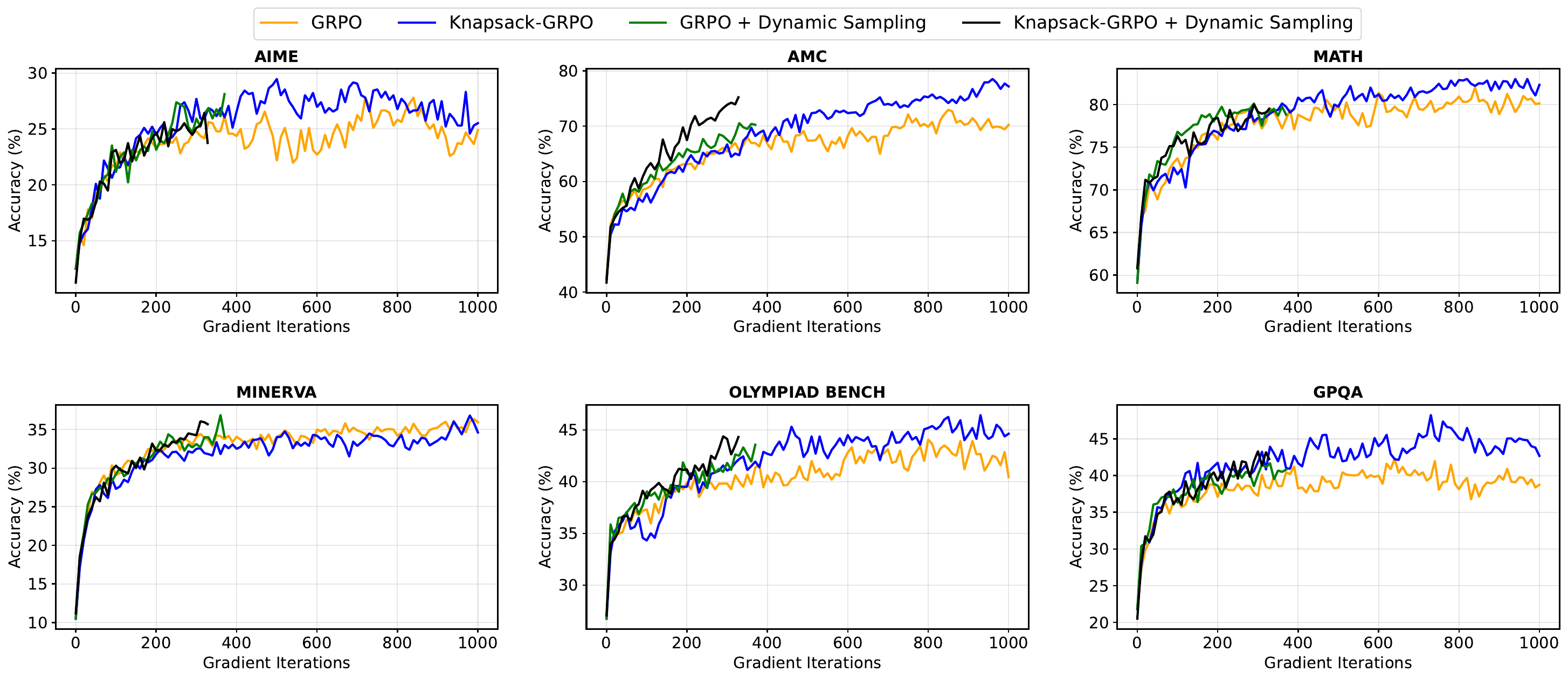}
    \caption{Performance of Qwen2.5-Math-7B as a function of the number of LLM gradient updates. This figure validates that effective gradients, derived from either dynamic sampling or the knapsack-based approach, lead to greater performance gains for the same number of updates.}
    \label{fig:dapo_gradient_iterations}
\end{figure}

\end{document}